\definecolor{Gray}{gray}{0.9}
\def\BibTeX{{\rm B\kern-.05em{\sc i\kern-.025em b}\kern-.08em
    T\kern-.1667em\lower.7ex\hbox{E}\kern-.125emX}}
    \newtheorem{theorem}{Theorem}
   \newtheorem{definition}{Definition}
\algnewcommand\algorithmicforeach{\textbf{for each}}
\newtheorem{Proposition}{Proposition}
\newtheorem{Observation}{Observation}
\begin{document}
\title{Wormhole Dynamics in Deep Neural Networks}

\author{Yen-Lung Lai\textsuperscript{},        
        Zhe Jin\textsuperscript{*}
      

 \thanks{\textsuperscript{*}Corresponding author: Zhe Jin is with Anhui Provincial Key Laboratory of Secure Artificial Intelligence, Anhui Provincial International Joint Research Center for Advanced Technology in Medical Imaging, School of Artificial Intelligence, Anhui University, Hefei 230093, China. (email:  jinzhe@ahu.edu.cn)}

 \thanks{\textsuperscript{}Yen-Lung Lai was in Anhui Provincial Key Laboratory of Secure Artificial Intelligence, Anhui Provincial International Joint Research Center for Advanced Technology in Medical Imaging, School of Artificial Intelligence, Anhui University, Hefei 230093, China. He is now with the LKCFES, Department of Computing, Universiti Tunku Abdul Rahman, Malaysia. (email: laiyl@utar.edu.my)}
\thanks{This work was supported by the National Natural Science Foundation of China  (No. 62376003) and Anhui Provincial Natural Science Foundation (No. 2308085MF200).
}

}


\maketitle
\thispagestyle{plain}
\pagestyle{plain}

\begin{abstract}
This work investigates the generalization behavior of deep neural networks (DNNs), focusing on the phenomenon of "fooling examples," where DNNs confidently classify inputs that appear random or unstructured to humans. To explore this phenomenon, we introduce an analytical framework based on maximum likelihood estimation, without adhering to conventional numerical approaches that rely on gradient-based optimization and explicit labels. Our analysis reveals that DNNs operating in an overparameterized regime exhibit a collapse in the output feature space. While this collapse improves network generalization, adding more layers eventually leads to a state of degeneracy, where the model learns trivial solutions by mapping distinct inputs to the same output, resulting in zero loss. Further investigation demonstrates that this degeneracy can be bypassed using our newly derived "wormhole" solution. The wormhole solution, when applied to arbitrary fooling examples, reconciles meaningful labels with random ones and provides a novel perspective on shortcut learning. These findings offer deeper insights into DNN generalization and highlight directions for future research on learning dynamics in unsupervised settings to bridge the gap between theory and practice.
\end{abstract}

\begin{IEEEkeywords}
Deep Neural Network, Neural Collapse, Implicit Bias, Simplicity Bias, Shortcut Learning, Wormhole
\end{IEEEkeywords}
\section{Introduction}

The rapid advancement of deep neural networks (DNNs) has revolutionized industries, enabling precise decision-making across healthcare, finance, national security, and infrastructure. Applications range from medical diagnosis and financial forecasting to ethical AI governance, showcasing their unparalleled ability to tackle complex, data-intensive problems \cite{spears2018deep, rudin2019stop, zheng2019finbrain, franzke2021data}. Despite these successes, the internal mechanics of DNNs remain poorly understood. This gap between theoretical understanding and practical success is particularly concerning, given the profound societal impacts of these networks.

One of the most puzzling aspects of DNNs is their ability to generalize effectively, even in highly overparameterized regimes where the number of model parameters far exceeds the number of training samples \cite{zhang2021understanding}. Classical statistical theory predicts severe overfitting in such cases, yet empirical evidence often shows the opposite: DNNs frequently exhibit robust generalization without explicit regularization, even when overparameterized.

Additionally, the training of DNNs is marked by significant challenges due to their highly non-convex loss landscapes, characterized by numerous local minima, low-curvature regions arising from saturating nonlinearities, and issues such as gradient explosion and vanishing gradients during backpropagation \cite{keskar2016large, dauphin2013big, bengio1994learning}. Despite these optimization complexities \cite{blum1988training}, gradient-based methods frequently succeed in locating global minimizers, achieving zero or near-zero training loss even when data and labels are randomized before training \cite{zhang2021understanding}. These observations give rise to two fundamental questions that are pivotal for connecting theoretical insights with the practical success of DNNs:
\begin{enumerate}
    \item Why does training deep neural networks often appear to converge successfully to a global minimum, despite the theoretical non-convexity of the loss landscape?
    \item Why do DNNs generalize well in practice despite their capacity for overfitting in an overparameterized regime?
\end{enumerate}
The second question is particularly important, as it directly determines the practical performance of DNNs and their ability to generalize across various data distributions. However, it is equally important to recognize that understanding the success of deep learning also requires attention to factors inherent in the training process, as outlined by the first question.

While previous research efforts have attempted to explain DNN convergence, the highly nonlinear architecture of typical DNNs complicates theoretical analysis. As a result, recent research has focused on deep linear networks \cite{arora2018convergence, bah2022learning, nguegnang2021convergence, saxe2013exact, arora2018optimization, arora2019implicit, chou2024gradient}. Unlike their nonlinear counterparts, which employ activation functions for greater expressivity, deep linear networks often struggle with real-world tasks, leading to underfitting and suboptimal performance. Nonetheless, deep linear networks exhibit intriguing nonlinear learning dynamics as their depth increases, making the exploration of gradient descent convergence properties a non-trivial task. Thus, addressing the aforementioned questions remains challenging even for linear neural networks due to their inherent non-convexity.

Existing literature often attributes the success of DNNs to implicit biases introduced by optimization algorithms such as Stochastic Gradient Descent (SGD) \cite{neyshabur2017exploring, pesme2021implicit, frei2022implicit}. These biases guide DNNs toward solutions that generalize well, as SGD tends to select simpler solutions and mitigate overfitting \cite{arpit2017closer, kalimeris2019sgd, valle2018deep}. Recent studies have also highlighted the phenomenon of shortcut learning, wherein DNNs identify superficial features that serve as shortcuts for solving classification tasks \cite{geirhos2020shortcut, hermann2020shapes, scimeca2021shortcut}. Despite the extensive focus on supervised learning, which demands substantial labeled training data, the investigation of implicit biases in unsupervised learning remains largely underexplored.

In this study, we examine the capacity of DNNs to function effectively in unsupervised settings, particularly when confronted with fooling examples—inputs that resemble random noise to human perception. We demonstrate that overparameterized DNNs can leverage maximum likelihood estimation to extract meaningful representations from random, unlabeled data. Specifically, our analysis reveals that overparameterized feedforward neural networks exhibit the phenomenon of Neural Collapse \cite{papyan2020prevalence}, characterized by input samples organizing into distinct feature clusters with minimal intra-cluster variability and maximal inter-cluster separation. These findings provide a theoretical foundation for understanding how DNNs achieve contrastive learning and generalization, even when processing inputs that appear unstructured or random.

\section{Related Works}

\subsection{Implicit Biases in SGD}

Extensive research has explored the reasons behind the success of deep learning, but these explanations have come under increasing scrutiny. One key area of focus is the concept of implicit bias in DNN training \cite{neyshabur2017exploring}. In overparameterized models, multiple local minima can minimize training error, but not all of them generalize well. Thus, minimizing training error alone is insufficient for effective learning; selecting the wrong minimum can result in poor generalization. It is hypothesized that generalization behavior depends on the algorithm used to minimize training error. Since Stochastic Gradient Descent (SGD) is the predominant optimization algorithm for DNN training, it has become the central focus for examining implicit bias through gradient flow analysis. This involves studying how the iterative updates performed by the algorithm shape the learning dynamics and generalization behavior of neural networks \cite{pesme2021implicit, frei2022implicit}.

Significant contributions from researchers such as Arora et al. \cite{arora2018convergence}, Bah et al. \cite{bah2022learning}, and Nguegnang et al. \cite{nguegnang2021convergence} emphasize the necessity of specific conditions for DNNs trained with SGD to converge to a global minimum.

However, understanding the role of implicit bias in SGD as a factor contributing to the success of DNNs remains a complex challenge. Geiping et al. \cite{geiping2021stochastic} argue that the implicit bias of SGD can be matched or replaced by explicit regularization. Their experiments showed that full-batch training achieved performance comparable to an optimized SGD baseline, reaching an accuracy of 95.67\% for a ResNet-18 model on the CIFAR-10 dataset. These results suggest that implicit bias alone may not fully account for the effectiveness of SGD in training DNNs.

Departing from the gradient flow focus of prior studies, Chiang et al. \cite{chiang2022loss} offered a novel perspective by training neural networks using various gradient-free optimizers. Experimental results showed that these gradient-free optimizers could achieve test accuracies comparable to SGD, challenging the prevailing belief that effective generalization in DNNs is solely due to the implicit biases induced by SGD.

\subsection{Simplicity Biases in SGD}

As research has advanced, increasing evidence points to simplicity biases in SGD. While these biases share similarities with implicit biases attributed to SGD, recent studies have revealed refined insights into how SGD exhibits simplicity biases, guiding DNNs to prioritize simple features. This preference for simpler features helps DNNs avoid overfitting, ultimately improving generalization \cite{arpit2017closer}.

Kalimeris et al. \cite{kalimeris2019sgd} found that in the early stages of training, SGD's performance improvements can be linked to learning a linear classifier. As training continues, SGD begins to learn more complex functions. They demonstrated this in an overparameterized linear model trained using SGD, which perfectly fit the training data while achieving optimal population accuracy when initialized with simple configurations. Their results suggest that a linear classifier can effectively explain the predictions of a highly nonlinear neural network in the initial phases of training.

Valle et al. \cite{valle2018deep} extended this understanding by exploring the parameter-function map, which describes the relationship between a model's parameters and the functions it can express. By associating specific parameter configurations with certain functions, this concept offers insights into how optimization algorithms like SGD navigate the parameter space and impact DNN behavior in function space, particularly with regard to generalization. Their research showed that functions with higher learning probabilities tend to have lower complexity, measured by metrics such as Lempel-Ziv complexity, suggesting that DNNs have an inherent bias toward simpler functions during training.

Supporting this, Shah et al. \cite{shah2020pitfalls} conducted experiments on the LSN (linear, 3-slab, and noise) dataset, which contains a linear predictive coordinate and a more complex 3-slab coordinate, with noise coordinates as irrelevant features. Despite theoretical expectations for equal treatment of the linear and 3-slab coordinates, DNNs trained with SGD prioritized the simpler linear coordinate over the more complex 3-slab coordinate. This highlighted a preference for simplicity over margin in the learning process. Surprisingly, the simplicity bias persisted even when the linear coordinate had a smaller margin than the 3-slab coordinate. Shah et al. also observed that simplicity biases in SGD can sometimes undermine generalization on more challenging or noisy tasks.

\subsection{Shortcut Learning}

The simplicity bias in DNNs appears to be non-arbitrary. Recent work by Geirhos et al. \cite{geirhos2020shortcut} connects this bias to shortcut learning, where models rely on superficial characteristics of the dataset instead of capturing deeper, meaningful structures. Such reliance can result in erroneous predictions, as the model fails to generalize beyond the surface-level patterns in the dataset. This shortcut learning may explain Shah et al.'s findings \cite{shah2020pitfalls}, where simplicity biases relies on 'shortcuts' that can lead to poor generalization.

Hermann et al. \cite{hermann2020shapes} investigated the use of features of varying difficulty, defining them based on the minimum network complexity required for feature extraction. They found that models prefer "easy" linear features over "difficult" non-linear features, even if the latter have higher predictive power. In a subsequent study, Hermann et al. \cite{hermann2023foundations} introduced the concept of "availability," extending the notion of predictivity to include factors affecting a model's likelihood of using a feature. Their controlled experiments showed that non-linear models, particularly deeper ones, exhibited a stronger shortcut bias. Theoretical analysis confirmed that availability bias is inevitable in non-linear architectures like ReLU networks, contrasting with the behavior of unbiased linear models.

Scimeca et al. \cite{scimeca2021shortcut} emphasized that some features are preferred over others, even when equally accessible for learning. They attributed this tendency to the Kolmogorov complexity of the features, noting that low-complexity (Kolmogorov-simple) features dominate the parameter space. As a result, DNNs exhibit a preference for simpler features during training, reinforcing the simplicity bias.

Rahaman et al. \cite{rahaman2019spectral} used Fourier analysis to uncover a significant learning bias in DNNs toward low-frequency functions, termed ‘spectral bias.’ Despite the theoretical ability of DNNs to approximate a broad range of functions, they tend to prioritize learning lower-frequency modes early in training, influencing the final parameterization.

Teney et al. \cite{teney2024neural} offered a new perspective, arguing against the prevailing view that simplicity bias arises from the architecture or gradient descent dynamics. They demonstrated that DNNs inherently favor functions of specific complexity, as measured by Fourier frequency, polynomial order, and compressibility. This bias stems from the network's parameterization itself rather than the training process. Their findings indicate that even networks initialized with random weights exhibit simplicity bias due to the properties of activation functions like ReLU, suggesting that simplicity bias is an intrinsic feature of DNNs, embedded in their design rather than a byproduct of training algorithms.

\section{Addressing the Paradox: Fooling Example}

\begin{figure}[!htp]
\centering
  \includegraphics[scale=0.54]{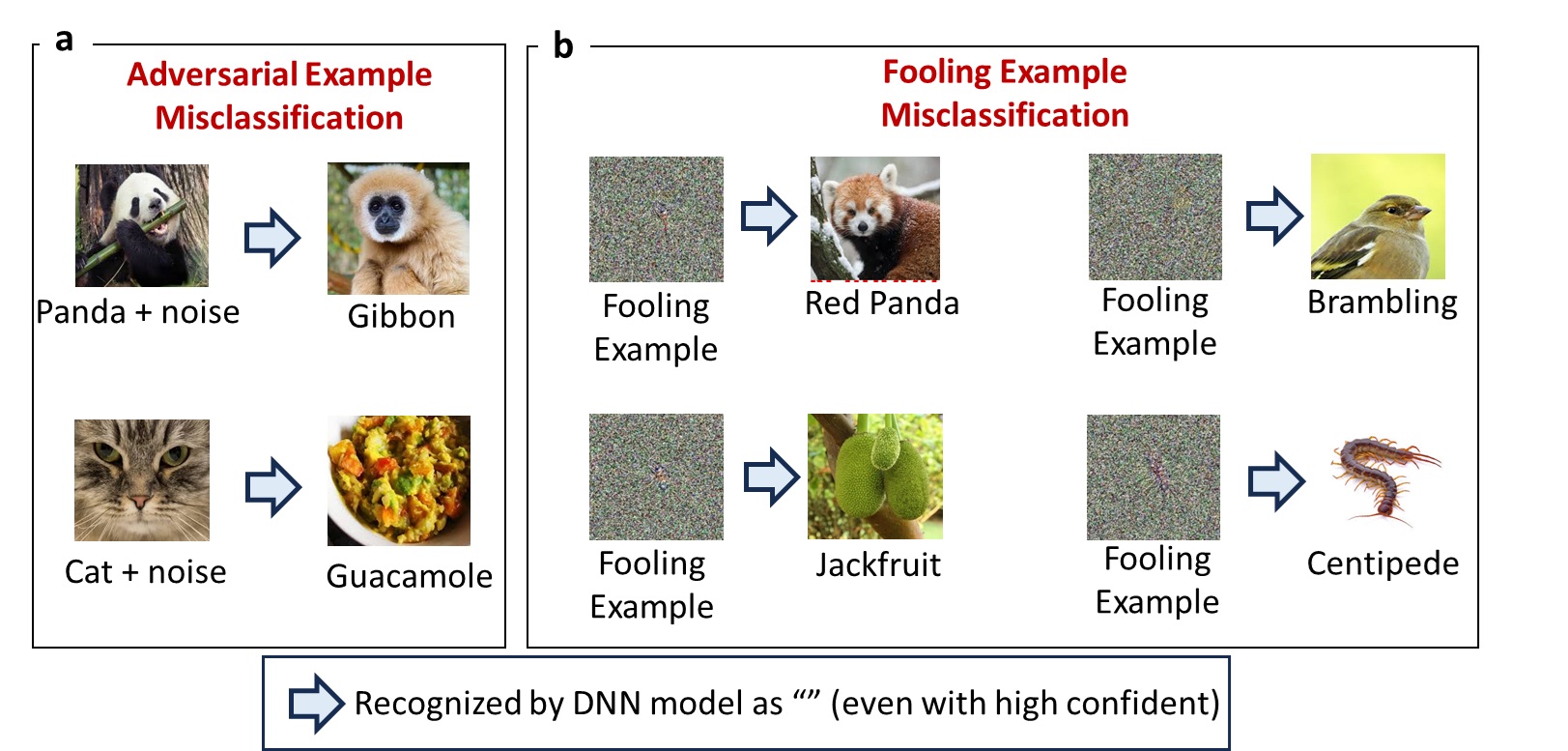}
  \caption{Illustrative examples showcasing DNN misclassifications caused by adversarial and fooling examples.}
  \label{fig:kayleedemo}
\end{figure}

The phenomenon of \textit{fooling examples}, where deep neural networks (DNNs) confidently misclassify inputs that appear nonsensical or unrecognizable to humans, warrants deeper investigation to better understand shortcut learning and generalization in DNNs. These examples, often perceived as random noise, are nonetheless assigned high-confidence predictions by DNNs, sometimes with accuracy exceeding 99\% \cite{nguyen2015deep, s23146378, KUMANO2023259}.

Unlike adversarial examples, as shown in Fig. \ref{fig:kayleedemo} panel (a), which result from small, deliberate perturbations to the input \cite{goodfellow2014explaining}, fooling examples involve misclassifications where DNNs assign meaningful labels despite the lack of any discernible structure in the inputs. An example of this misclassification phenomenon is illustrated in Fig. \ref{fig:kayleedemo} panel (b), underscoring a paradox within traditional supervised learning, where meaningful labels are typically viewed as essential for effective model training.

Existing approaches, such as adversarial training, address adversarial examples by augmenting the training dataset with perturbed samples, thereby improving model robustness \cite{zhang2019adversarial}. However, fooling examples pose a more fundamental challenge: they inherently lack coherent labels and cannot be incorporated into conventional supervised learning frameworks. This raises a critical question: \textit{How do DNNs achieve high-confidence generalization on inputs that appear entirely random or meaningless?}

This issue underscores the need for a deeper understanding of the mechanisms that enable DNNs to make confident predictions in the absence of meaningful labels, particularly within unsupervised learning contexts. Gaining insights into these mechanisms is crucial not only for explaining how DNNs generalize effectively from seemingly random inputs but also for uncovering the fundamental principles that govern their learning dynamics and behavior.

\subsection{Summary of Contributions}
Our main contributions in this work can be summarized as follows:

\begin{enumerate}
    \item We derive an analytical solution for maximum likelihood estimation (MLE), demonstrating its existence within a feedforward linear neural network architecture. This can be achieved by maximizing the norm of the input samples in an unsupervised setting, without necessarily relying on labeled data and numerical optimization methods such as gradient-based techniques (e.g., SGD) (see \textbf{Propositions} \ref{propo:1}, \ref{propo:2}, and \ref{eq:proposition13}).
    
    \item We perform a convergence analysis, revealing the collapse of the output feature space and an improvement in model generalization, which enhances clustering of input data under an overparameterized regime (see \textbf{Observation} \ref{claim:1}). This phenomenon is explained by the convergence of angles between pairwise features, which approach zero in an extended vector space due to normalization following norm maximization. This process incorporates layer information and transitions the derived analytical MLE into an approximation in the extended vector space (see \textbf{Theorem} \ref{th:1}).
    
    \item While feature collapse improves clustering performance, it ultimately leads to a state of degeneracy where even distant inputs are mapped to the same output, resulting in zero loss. We establish the existence of a \textit{wormhole} solution (see \textbf{Theorem} \ref{thr:wormhole}) that bypasses this degeneracy and enables the reconciliation of an arbitrary random label \( P' \) with a meaningful (human-recognizable) label \( P \). This insight provides a novel perspective on how DNNs generalize effectively from seemingly random or unstructured inputs. A formal definition of label reconciliation is presented in \textbf{Definition} \ref{def:12fg}.
\end{enumerate}

\section{Deep Neural Network Optimization via Maximum Likelihood Estimation}
Before we delve into the details of our methodology, let’s briefly outline the fundamental structure of a deep neural network.

\subsection{Definition: Feedforward Linear Neural Network}
A typical deep neural network architecture consists of at least \( L \geq 1 \) layers, following a feedforward design. This architecture can be described as: 
\begin{align}\label{eq:nonlinearcnn}
H(x) = g_L \circ g_{L-1} \circ \ldots \circ g_1(x),
\end{align}
where each layer function \( g_{\ell} : \mathbb{R}^{d_1} \to \mathbb{R}^{d_{\ell+1}} \) for \( \ell = 1, 2, \ldots, L \) is typically nonlinear, expressed as
\begin{align}\label{eq:nonlinearcnn2}
g_{\ell}(x) = \sigma(G_{\ell} x + b_{\ell}),
\end{align}
with weight matrices \( G_{\ell} \in \mathbb{R}^{d_{\ell+1} \times d_{\ell}} \), biases \( b_{\ell} \in \mathbb{R}^{d_{\ell+1}} \), and the non-linear activation function (acting element-wise) \( \sigma : \mathbb{R} \to \mathbb{R} \).

In the context of supervised learning, given training data \( (x^{(i)}, y^{(i)})_{i=1}^{N} \), where inputs \( x^{(i)} \in \mathbb{R}^{d_1} \) and labels \( y^{(i)} \in \mathbb{R}^{d_L} \), the optimization objective is to minimize the following expression:
\begin{align}\label{eq:cnnopt}
\min_{G_L, \ldots, G_1} \frac{1}{N} \sum_{i=1}^{N} \mathcal{L}(H(x^{(i)}), y^{(i)}),
\end{align}
where \( \mathcal{L}: \mathbb{R}^{d_L \times d_L} \to \mathbb{R}_{+} \) is a loss function, such as the L2 norm.

This optimization is typically performed using stochastic gradient descent (SGD) and backpropagation to iteratively update the weights \( G_L, G_{L-1}, \ldots, G_1 \) to minimize the overall loss. 

Because of the nonlinear nature of Eq. \eqref{eq:nonlinearcnn} and Eq. \eqref{eq:nonlinearcnn2}, the optimization problem becomes complex. Recent theoretical pursuits have shifted attention toward a simplified scenario involving linear neural networks \cite{arora2019implicit, chou2024gradient}, where \( \sigma(x) = x \) and \( b_{\ell} = 0 \). This simplification transforms Eq. \eqref{eq:nonlinearcnn} to
\begin{align}\label{eq:linearlosscnn}
H_{\text{Linear}}(x) = G_L G_{L-1} \ldots G_1 x.
\end{align}
Considering \( x^{(i)} \) and \( y^{(i)} \) as the \( i \)-th input-label pair respectively, the optimization objective in Eq. \eqref{eq:cnnopt} can be interpreted as minimizing the pairwise distance between these vectors. An example of such a distance measure could be the angle differences, which can be described as
\begin{align}\label{eq:linearloss6}
\min_{G_L, \ldots, G_1} \frac{1}{N} \sum_{i=1}^{N} \frac{1}{\pi} \arccos \left( \frac{H_{\text{Linear}}(x^{(i)})}{\|H_{\text{Linear}}(x^{(i)})\|} \cdot \frac{y^{(i)}}{\|y^{(i)}\|} \right).
\end{align}

\subsection{An Analytical Solution for Maximum Likelihood Estimation}

The goal of maximum likelihood estimation (MLE) in DNN training is to maximize the likelihood that the inferred parameter, \( \theta \), matches the true parameter, \( \theta_0 \).

Given a training sample \( x \in \mathcal{D} \), the task is to find a function \( f \) that maps the sample statistics to the inferred parameter \( \theta \), such that \( \theta = f(x_1, \ldots, x_N) \) for all \( x \in \mathcal{D} \). If an analytical solution is not possible, numerical methods like gradient-based techniques (e.g., SGD) or gradient-free methods (e.g., genetic algorithms, Bayesian optimization) can be used.

Nevertheless, we here present an analytical solution for MLE in DNN training, without necessarily relying on numerical optimization methods. Specifically, maximizing the likelihood can be represented as follows:
\begin{align}
\theta := \arg\max_{\theta} \mathcal{L}(\theta; \mathcal{D}),
\end{align}
where \( \theta \) is the inferred parameter, and the likelihood \( \mathcal{L}(\theta; \mathcal{D}) \), typically referred to as the joint density of \( x \in \mathcal{D} \), can be described as a function of \( \theta \):
\begin{align}\label{eq:logbothsite}
&\mathcal{L}(\theta; \mathcal{D} = x_1, \ldots, x_N) = f(\mathcal{D} = x_1, \ldots, \mathcal{D} = x_N; \theta_1, \ldots, \theta_N) \nonumber\\
&= \prod_{i=1}^{N} f(\mathcal{D} = x_i; \theta).
\end{align}
The second line of Eq. \eqref{eq:logbothsite} follows the standard assumption in machine learning that all training samples \( (x_1, \ldots, x_N) \in \mathcal{D} \) are independent and identically distributed (i.i.d.). Therefore, each \( x \in \mathcal{D} \) is governed by the same parameter \( \theta \). Maximizing the likelihood is typically achieved by \textit{minimizing the negative log-likelihood}, expressed as: 
\begin{align}\label{eq:solveoptimize}
- \frac{1}{N} \log \mathcal{L}(\theta; \mathcal{D}) = - \frac{1}{N} \sum_{i=1}^{N} \log(f(\mathcal{D} = x_i; \theta)),
\end{align}
where the likelihood is averaged by dividing by \( N \).

However, directly solving this minimization problem does not provide meaningful insights into the true underlying distribution \( \mathcal{D} \), which should ideally correspond to the true parameter \( \theta_0 \). We give below a proposition to argue the existence of an analytic solution for Eq. \eqref{eq:solveoptimize}.

\begin{Proposition}\label{propo:1}
For a set of training samples \( (x_1, \ldots, x_N) \in \mathcal{D} \), where each \( x_i \) is a vector of length \( k < n \) drawn from a distribution \( \mathcal{D} \in \mathbb{R}^k \), there exists an analytical maximum likelihood solution such that:
\begin{align}\label{eq:resugejio39}
-\frac{1}{n} \log_2 \mathcal{L}(\theta; \mathcal{D}) = H_2(\theta_0) \in (0, 1),
\end{align}
where the sample statistics \( f(x \in \mathcal{D}; \theta_0) \) follow a Binomial distribution if and only if the inferred parameter equals the true parameter, i.e., \( \theta = \theta_0 = \frac{k}{n} \), where \( H_2(\theta_0) = -\theta_0 \log_2(\theta_0) - (1 - \theta_0) \log_2(1 - \theta_0) \) is the binary cross-entropy.
\end{Proposition}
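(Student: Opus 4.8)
\emph{Proof strategy.}
The plan is to reduce the statement to the elementary fact that the binary relative entropy is nonnegative and vanishes only on the diagonal, once the probabilistic model behind ``the sample statistics follow a Binomial distribution'' is made explicit. First I would fix that model: take the relevant statistic of a sample to be a count $k$ of ``successes'' out of $n$ i.i.d.\ $\mathrm{Bernoulli}(\theta)$ trials --- the natural reading when the length-$k$ inputs are embedded into the $n$-dimensional input space of the network, so that $k=\|x_i\|_2^2$ is exactly the squared norm whose maximization is at issue. Under this model the per-sample density is $f(x_i;\theta)=\theta^{k}(1-\theta)^{n-k}$ up to a $\theta$-independent combinatorial factor that does not affect the optimization, and the i.i.d.\ assumption used to pass to the product form in Eq.~\eqref{eq:logbothsite} makes the joint likelihood Binomial in the aggregate count; with the averaging of Eq.~\eqref{eq:solveoptimize} this only rescales the per-sample objective by a constant, so it is enough to treat one representative count $0<k<n$.

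Next I would extract the closed-form maximizer. Setting $\theta_0:=k/n$, a direct computation gives
\begin{align*}
-\tfrac{1}{n}\log_2 \mathcal{L}(\theta;\mathcal{D}) \;=\; -\theta_0\log_2\theta-(1-\theta_0)\log_2(1-\theta)\;=:\;g(\theta),
\end{align*}
and $g$ is strictly convex on $(0,1)$ since $g''(\theta)=\tfrac{\theta_0}{\theta^{2}\ln 2}+\tfrac{1-\theta_0}{(1-\theta)^{2}\ln 2}>0$. Solving $g'(\theta)=0$, i.e.\ $-\tfrac{\theta_0}{\theta}+\tfrac{1-\theta_0}{1-\theta}=0$, gives the unique stationary point $\widehat\theta=\theta_0=k/n$, which is therefore the global minimizer; this is the analytical MLE, obtained without any numerical search. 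Evaluating at it, $g(\theta_0)=-\theta_0\log_2\theta_0-(1-\theta_0)\log_2(1-\theta_0)=H_2(\theta_0)$.

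For the ``if and only if'' I would rewrite $g(\theta)=H_2(\theta_0)+D(\theta_0\,\|\,\theta)$, where $D(\theta_0\,\|\,\theta)=\theta_0\log_2\tfrac{\theta_0}{\theta}+(1-\theta_0)\log_2\tfrac{1-\theta_0}{1-\theta}$ is the binary relative entropy. By Gibbs' inequality $D(\theta_0\,\|\,\theta)\ge 0$, with equality precisely when $\theta=\theta_0$, so $-\tfrac{1}{n}\log_2\mathcal{L}(\theta;\mathcal{D})$ equals $H_2(\theta_0)$ if and only if the inferred parameter matches the true parameter $\theta_0=k/n$. Finally, $0<k<n$ forces $\theta_0\in(0,1)$, hence $H_2(\theta_0)\in(0,1]$, and $H_2(\theta_0)\in(0,1)$ as soon as $\theta_0\neq\tfrac12$ --- which is the stated range.

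The one-dimensional convex minimization and the Gibbs-inequality step are routine and should cause no difficulty. The genuine obstacle is conceptual and belongs at the front of the argument: one must justify \emph{why} the sample statistics are Binomial --- pinning down the count/norm encoding of the inputs and invoking the i.i.d.\ hypothesis that licenses the product likelihood --- and be explicit about the normalizing constant, so that the identity $-\tfrac{1}{n}\log_2\mathcal{L}=H_2(\theta_0)$ holds on the nose rather than merely up to a harmless factor of $N$. Once those modeling choices are spelled out, the proposition is nothing more than the identification of the unique minimizer of a KL-type functional, with nothing architecture-specific remaining.
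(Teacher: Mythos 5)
Your proof is correct, and at its computational core it runs parallel to the paper's: both arguments decompose the averaged negative log-likelihood into an entropy term plus a divergence-type term (your identity $g(\theta)=H_2(\theta_0)+D(\theta_0\,\|\,\theta)$ is exactly the normalized, per-sample form of the paper's add-and-subtract manipulation in Eq.~\eqref{eq:convergekl}), both adopt the same Bernoulli-count model with the binomial coefficient treated as a $\theta$-independent normalizer, and both land on the stationary point $\theta=\theta_0=k/n$ with value $H_2(\theta_0)$. Where you genuinely differ is in how optimality and the ``if and only if'' are certified. The paper routes the iff through the density-ratio condition $f(x;\theta_0)/f(x;\theta)=\binom{n}{k}$ of Eq.~\eqref{eq:stringentrelation} and then checks only the first-order condition \eqref{eq:fjfie999c}; note that the ratio condition is equivalent to $\theta^k(1-\theta)^{n-k}=\theta_0^k(1-\theta_0)^{n-k}$, and since $\theta\mapsto\theta^k(1-\theta)^{n-k}$ is unimodal rather than injective on $(0,1)$, that equivalence pins down $\theta=\theta_0$ only in the case $\theta_0=k/n$ of actual interest, and the first-order condition alone does not certify a global minimum. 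Your route --- strict convexity of $g$ plus Gibbs' inequality $D(\theta_0\,\|\,\theta)\ge 0$ with equality iff $\theta=\theta_0$ --- delivers uniqueness, global optimality, and the iff in one stroke, and is the tighter argument. You also correctly flag two points the paper glosses over: the mismatch between the $1/N$ averaging of Eq.~\eqref{eq:solveoptimize} and the $1/n$ normalization of Eq.~\eqref{eq:resugejio39}, which you resolve via the i.i.d.\ assumption, and the fact that the stated range $(0,1)$ silently excludes the value $H_2(1/2)=1$ attained at $\theta_0=1/2$.
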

\begin{proof}
We begin by manipulating the log-likelihood expression. We add and subtract the log-likelihood terms $\log(f(\mathcal{D} = x_i; \theta_0))$. This adjustment is valid because it adds zero to the equation, leaving the equation's intrinsic meaning unchanged. Thus, we can rewrite Eq. \eqref{eq:solveoptimize} in terms of the true parameter statistic as:
\begin{align}\label{eq:convergekl}
&\frac{1}{N} \left[ \sum_{i=1}^{N} \log \left( \frac{f(\mathcal{D} = x_i; \theta_0)}{f(\mathcal{D} = x_i; \theta)} \right) - \sum_{i=1}^{N} \log(f(\mathcal{D} = x_i; \theta_0)) \right].
\end{align}
Now, let us assume that all training samples $(x_1, \ldots, x_N) \in \mathcal{D}$ are i.i.d.. For any $x \in \mathcal{D}$, we use the following description for the parameter-inferred function:
\begin{align}\label{eq:densityinfer}
f(x; \theta) = \prob{x = k} = \theta^k(1-\theta)^{n-k}.
\end{align}
The true function corresponding to $\theta_0$ is given by:
\begin{align}\label{eq:densitytrue}
f(x; \theta_0) = {n \choose k} \theta_0^k (1 - \theta_0)^{n - k}.
\end{align}
This formulation helps establish a relationship between the inferred and true function. Specifically, we have:
\begin{align}\label{eq:stringentrelation}
\frac{f(x ; \theta_0)}{f(x ; \theta)} = {n \choose k}, \quad \text{if and only if} \quad \theta_0 = \theta.
\end{align}
Now, applying the logarithm $\log_2(.)$ to Eq. \eqref{eq:stringentrelation}, we substitute it into Eq. \eqref{eq:convergekl}, which gives:
\begin{align}\label{eq:intermediatelogi}
- \log_2 \mathcal{L}(\theta; \mathcal{D}) = -\log_2 \left( \theta_0^k (1-\theta_0)^{n-k} \right) = n H_2(\theta_0),
\end{align}
where $H_2(\theta_0)$ is the binary cross-entropy function.

Dividing both sides of Eq. \eqref{eq:intermediatelogi} by $n$, we normalize the result to the interval $(0,1)$, leading to Eq. \eqref{eq:resugejio39}.

Finally, to find the optimal value of $\theta_0$ that minimizes the average negative log-likelihood, we differentiate Eq. \eqref{eq:intermediatelogi} with respect to $\theta_0$ and set the derivative equal to zero:
\begin{align} \label{eq:fjfie999c}
\frac{d}{d\theta_0} \left[ -\log_2 (\theta_0^k (1 - \theta_0)^{n - k}) \right] = 0  
\end{align}
which simplifies to:
\begin{align}
\frac{\theta_0}{1 - \theta_0} = \frac{k}{n - k}, \quad \text{leading to} \quad \theta_0 = \frac{k}{n}.
\end{align}
Thus, the optimal solution for $\theta_0$ is $\frac{k}{n}$, which minimizes the average negative log-likelihood, completing the proof.
\end{proof}

It is important to note that Proposition \ref{propo:1} requires that any input $x\in{\mathcal{D}}$ to be i.i.d, where the true function for the sample statistics is described as a Binomial distribution. Moreover, the parameter-inferred function represents a specific case of the true function. Below, we present an additional proposition that shows how the analytical maximum likelihood solution can be expressed in terms of the inner product of arbitrary pairs of vectors $w \in \mathbb{R}^{k}$ and $w' \in \mathbb{R}^{k}$.

\begin{Proposition}\label{propo:2}
Let \( w \in \mathbb{R}^{k} \) and \( w' \in \mathbb{R}^{k} \) be two random vectors. Define \( v_i \in \mathbb{R}^{k} \) as the random unit vectors for \( i = 1, 2, \dots, n \), where each \( v_i \) is independently drawn from the standard normal distribution \( \mathcal{N}(0,1) \). Also, define \( \beta_i = \mathbf{1}_{\mathsf{sgn}(v_i^T \cdot w) \neq \mathsf{sgn}(v_i^T \cdot w')} \) as an indicator function, where \( \beta_i = 1 \) if \( \mathsf{sgn}(v_i^T \cdot w) \neq \mathsf{sgn}(v_i^T \cdot w') \) and \( \beta_i = 0 \) otherwise, and \( \mathsf{sgn}(\cdot) \) is the Signum function. If the Hamming distance, denoted as \( x \), between the transformed sample pair \( v_i^T w \) and \( v_i^T w' \) for \( i = 1, 2, \ldots, n \), satisfies:
\begin{align}
x = \sum_{i=1}^{n} \beta_i = \sum_{i=1}^{n} \mathbf{1}_{\mathsf{sgn}(v_i^T \cdot w) \neq \mathsf{sgn}(v_i^T \cdot w')} = k,
\end{align}
then the following relationship holds:
\begin{align}\label{eq:soltuoab9}
\boxed{-\frac{1}{n} \log_2 \mathcal{L}(\theta; \mathcal{D}) = H_2\left( \frac{1}{\pi} \arccos\left( \frac{w}{\|w\|} \cdot \frac{w'}{\|w'\|} \right) \right).}
\end{align}
\end{Proposition}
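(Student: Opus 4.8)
The plan is to combine Proposition~\ref{propo:1} with the classical random-hyperplane rounding identity (the Goemans--Williamson/SimHash lemma). The bridge will be to show that the Hamming distance $x$ is a Binomial random variable whose success parameter is exactly the normalized angle between $w$ and $w'$, so that the hypothesis $x = k$ of Proposition~\ref{propo:1} forces $\theta_0 = k/n = \frac{1}{\pi}\arccos\big(\frac{w}{\|w\|}\cdot\frac{w'}{\|w'\|}\big)$, after which Eq.~\eqref{eq:soltuoab9} is just a substitution into the conclusion of that proposition.

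First I would fix $w,w'$ and write $\theta_{w,w'} = \arccos\big(\frac{w}{\|w\|}\cdot\frac{w'}{\|w'\|}\big) \in [0,\pi]$ for the angle between them. Since each $v_i$ has i.i.d.\ standard normal coordinates, its law is rotationally invariant, so $v_i/\|v_i\|$ is uniform on the unit sphere $S^{k-1}$ and the sign pattern $(\mathsf{sgn}(v_i^T w),\mathsf{sgn}(v_i^T w'))$ depends only on this direction. The event $\{\mathsf{sgn}(v_i^T w) \neq \mathsf{sgn}(v_i^T w')\}$ is exactly the event that the hyperplane with normal $v_i$ separates $w$ from $w'$; projecting onto the $2$-plane $\mathrm{span}(w,w')$, this is the event that the projected direction falls in one of two antipodal arcs of total angular measure $2\theta_{w,w'}$, hence has probability $\theta_{w,w'}/\pi$. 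Therefore each $\beta_i \sim \mathrm{Bernoulli}(\theta_{w,w'}/\pi)$.

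Next, because the $v_i$ are independent, the indicators $\beta_1,\dots,\beta_n$ are i.i.d.\ Bernoulli, so $x = \sum_{i=1}^n \beta_i \sim \mathrm{Binomial}(n,\theta_{w,w'}/\pi)$ --- precisely the Binomial sample-statistics hypothesis of Proposition~\ref{propo:1}, with the parameter-inferred density $f(x;\theta) = \theta^k(1-\theta)^{n-k}$ being the probability of one such configuration. Under the stated hypothesis $x = k$, Proposition~\ref{propo:1} gives the maximum-likelihood value $\theta_0 = k/n$, while the previous paragraph shows the true per-bit parameter is $\theta_{w,w'}/\pi$; the ``if and only if'' clause of Proposition~\ref{propo:1} then identifies these, i.e.\ $\theta_0 = k/n = \frac{1}{\pi}\arccos\big(\frac{w}{\|w\|}\cdot\frac{w'}{\|w'\|}\big)$. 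Substituting this $\theta_0$ into the conclusion $-\frac{1}{n}\log_2 \mathcal{L}(\theta;\mathcal{D}) = H_2(\theta_0)$ of Proposition~\ref{propo:1} yields Eq.~\eqref{eq:soltuoab9}.

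I expect the main obstacle to be the geometric computation of the per-coordinate flip probability for general $k$: one must justify the reduction to the two-dimensional plane spanned by $w$ and $w'$, verify that the separating region indeed has normalized measure $\theta_{w,w'}/\pi$, and be careful about the precise meaning of ``$v_i$ drawn from $\mathcal{N}(0,1)$'' so that the rotational-invariance argument is legitimate. A secondary subtlety is making explicit that the passage from the empirical ratio $k/n$ to the population quantity $\theta_{w,w'}/\pi$ is exactly the consistency condition (inferred parameter equals true parameter) under which Proposition~\ref{propo:1} is being invoked, rather than an equality that holds for every realization of $x$.
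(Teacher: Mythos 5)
Your proposal is correct and follows essentially the same route as the paper: establish that each $\beta_i$ is an i.i.d.\ Bernoulli variable with success probability $\frac{1}{\pi}\arccos\left(\frac{w}{\|w\|}\cdot\frac{w'}{\|w'\|}\right)$ via random-hyperplane rounding, conclude $x\sim\mathrm{Bin}(n,\theta_0)$, and feed this Binomial structure with $\theta=\theta_0=k/n$ back into Proposition~\ref{propo:1}. The only difference is that you supply the rotational-invariance/two-plane projection argument for the flip probability, whereas the paper simply cites the cosine-distance LSH result of Charikar, and you are somewhat more explicit that identifying $k/n$ with the angular quantity is the consistency condition under which Proposition~\ref{propo:1} is invoked.
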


\begin{proof}
We use the Cosine Distance-based Locality-Sensitive Hashing (LSH) \cite{charikar2002similarity}, where individual functions \( h_i(\cdot) \) are defined by randomly chosen unit vectors \( v_i \in \mathbb{R}^{k} \), and the Signum function is used for quantization. Specifically:
\begin{align}
& h_i(w) = \mathsf{sgn}(v_i^T \cdot w) \in \{-1, 1\}, \nonumber \\
& h_i(w') = \mathsf{sgn}(v_i^T \cdot w') \in \{-1, 1\}, \quad \text{for } i = 1, \ldots, n.
\end{align}
Each \( v_i \) is drawn randomly from the standard normal distribution \( \mathcal{N}(0, 1) \), and each function \( h_i(\cdot) \) produces a probability of difference between \( (w, w') \) in the transformed domain. This probability can be expressed as:
\begin{align}\label{eq:solgthest24}
\theta_0 = \frac{\sum_{i=1}^{n} h_i(w) \neq h_i(w')}{n} = \frac{1}{\pi} \arccos\left( \frac{w}{\|w\|} \cdot \frac{w'}{\|w'\|} \right).
\end{align}
We are interested in the case when \( \theta=\theta_0 = \frac{k}{n} \), as stated in Proposition \ref{propo:1}. This corresponds to the scenario where the Hamming distance between the two vectors (after the transformation) equals \( k \), as described by:
\begin{align}\label{eq:doo3t}
x = \sum_{i=1}^{n} \beta_i = \sum_{i=1}^{n} \mathbf{1}_{\mathsf{sgn}(v_i^T \cdot w) \neq \mathsf{sgn}(v_i^T \cdot w')} = k.
\end{align}
Clearly, \( \beta_i \) are i.i.d. random variables, and thus \( x \sim \text{Bin}(n, \theta_0) \), following a Binomial distribution:
\begin{align}\label{eq:lshbinomialdas0}
\mathbb{P}(x = k) = {n \choose k} (\theta_0)^k (1 - \theta_0)^{n-k}.
\end{align}
This formulation aligns with the description of the true statistics of \( x \) and \( \theta_0 \) given in Proposition \ref{propo:1}, where \( x \) is now described as the pairwise Hamming distance between the transformed sample pair \( v_i^T \cdot w \) and \( v_i^T \cdot w' \) (for \( i = 1, 2, \ldots, n \)).
\end{proof}

\subsection{MaxLikelihood Algorithm}
Our devised MaxLikelihood algorithm is outlined in Algorithm \ref{algo:maxlik}. This algorithm utilizes a cosine distance-based LSH within a feedforward linear neural network architecture, characterized by a maximum layer depth \( L \) and width \( n \). Briefly, the algorithm begins with the random initialization of the weight matrices \( G^*_{\ell}\in\RR^{n\times{k}} \). The optimization process involves selecting $k$ rows from \( G^*_{\ell} \) to form \( G_{\ell} \subset G^*_{\ell} \) that maximize the norm of the output vector \( w_{\ell} = G_{\ell} w_{\ell-1} \in{\RR^{k}}\) for each layer \( \ell = 1, 2, \dots, L \). Consequently, as the number of layers \( L \) increases, the norm \( \| w_L \| \) will increase proportionally\footnote{As \( L \) increases and becomes sufficiently large, the norm \( \| w_L \| \) will approach infinity. However, we can perform normalization by dividing \( w_\ell \) by its norm, \( \frac{w_\ell}{\| w_\ell \|} \), at the output of each layer to prevent each intermediate output vector from becoming infinite.
}. 

We present the following proposition to formalize the optimization process described above, emphasizing that an analytical solution for MLE exists within a feedforward neural network architecture of width \( n \) and depth \( L \):

\begin{Proposition}\label{eq:proposition13}
Let \(w \in \mathbb{R}^k \) denote an input vector for Algorithm \ref{algo:maxlik}, and let \( G^*_{\ell} \in \mathbb{R}^{n \times k} \) represent the weight matrix of the \(\ell\)-th layer of a feedforward linear neural network with width \(n\), where the entries of \(G^*_{\ell}\) follow a standard normal distribution \( \mathcal{N}(0,1) \). Let \( G_{\ell} \subset G^*_{\ell} \in \mathbb{R}^{k \times k} \) denote a submatrix of \( G^*_{\ell} \), obtained by selecting \( k \) rows from the original weight matrix. For an arbitrary test vector \(w'\in\mathbb{R}^k\), by maximizing the norm \(
\norm{w_{\ell}}=\| G_{\ell} w_{\ell-1} \|\) for each layer \( \ell = 1, \dots, L \), we derive:
\begin{align}\label{eq:solution1}
-\frac{1}{n}\log_2{\mathcal{L}(\theta;\mathcal{D})} &= H_2\left(\frac{1}{\pi} \arccos\left( \frac{w_{L}}{\|w_{L}\|} \cdot \frac{w'_{L}}{\|w'_{L}\|} \right) \right) \nonumber \\
&= H_2\left( \frac{1}{\pi} \arccos\left( \frac{w_{L-1}}{\|w_{L-1}\|} \cdot \frac{w'_{L-1}}{\|w'_{L-1}\|} \right) \right) \nonumber \\
&= \ldots \nonumber \\
&= H_2\left( \frac{1}{\pi} \arccos\left( \frac{w_{1}}{\|w_{1}\|} \cdot \frac{w'_{1}}{\|w'_{1}\|} \right) \right) \nonumber \\
&= H_2\left( \frac{1}{\pi} \arccos\left( \frac{w}{\|w\|} \cdot \frac{w'}{\|w'\|} \right) \right).
\end{align}
\end{Proposition}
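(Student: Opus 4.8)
The plan is to reduce the whole display to a single layer‑invariant quantity — the pairwise angle — and to obtain the $\mathcal L$‑identity at the top from Proposition \ref{propo:2}. Write $w_0:=w$, $w'_0:=w'$ and let $\rho_\ell:=\hat w_\ell\cdot\hat w'_\ell$ denote the cosine of the pairwise angle at layer $\ell$; then \eqref{eq:solution1} is exactly the assertion that $H_2(\tfrac1\pi\arccos\rho_\ell)$ is the same for $\ell=L,L-1,\dots,1,0$ and that this common value equals $-\tfrac1n\log_2\mathcal L(\theta;\mathcal D)$. For the top line I would apply Proposition \ref{propo:2} directly to the pair $(w_L,w'_L)$ under the Hamming‑distance hypothesis $x=\sum_{i=1}^n\beta_i=k$ of that statement, which yields $-\tfrac1n\log_2\mathcal L(\theta;\mathcal D)=H_2(\tfrac1\pi\arccos\rho_L)$ and, since the hypothesis pins the empirical sign‑disagreement rate at $k/n$, also $\tfrac1\pi\arccos\rho_L=k/n$ by \eqref{eq:solgthest24}; so the common value is $H_2(k/n)$.

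The remaining work is to show that each layer map $w_{\ell-1}\mapsto w_\ell=G_\ell w_{\ell-1}$ — with $G_\ell$ the $k\times k$ submatrix of the Gaussian $G^*_\ell$ chosen to maximize $\|G_\ell w_{\ell-1}\|$ — leaves the angle fixed, $\rho_\ell=\rho_{\ell-1}$, so that the chain telescopes from $\ell=L$ down to the input pair, each step preserving $H_2(\tfrac1\pi\arccos\rho_\ell)=H_2(\tfrac1\pi\arccos\rho_{\ell-1})$. I would write $v_1^\top,\dots,v_n^\top$ for the rows of $G^*_\ell$ and note that the pairs $(v_i^\top w_{\ell-1},v_i^\top w'_{\ell-1})_{i=1}^n$ are i.i.d.\ bivariate Gaussian with correlation exactly $\rho_{\ell-1}$. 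Two facts then drive the step: (i) a Gaussian matrix is an approximate isometry, so $\sum_i(v_i^\top w_{\ell-1})(v_i^\top w'_{\ell-1})$ and the matching squared norms concentrate on $w_{\ell-1}^\top w'_{\ell-1}$, $\|w_{\ell-1}\|^2$, $\|w'_{\ell-1}\|^2$ up to a common factor, so the full $n$‑dimensional lift preserves the angle; (ii) the cosine‑LSH identity \eqref{eq:solgthest24} is unchanged under positive rescaling of each $v_i$, hence insensitive to how the width is chosen. Restricting to the $k$ rows maximizing $\|G_\ell w_{\ell-1}\|^2=\sum_{i\in S}(v_i^\top w_{\ell-1})^2$, I would argue that this restricted map is still, in the large‑width limit, a scaled isometry on $\mathrm{span}\{w_{\ell-1},w'_{\ell-1}\}$, whence $\rho_\ell=\rho_{\ell-1}$.

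The hard part is fact (i) \emph{restricted to the selected rows}: the maximizing index set $S$ depends on the $v_i$, and conditioning $|v_i^\top w_{\ell-1}|$ to lie among the $k$ largest skews the conditional sign of $v_i^\top w'_{\ell-1}$ toward $\mathsf{sgn}(\rho_{\ell-1}\,v_i^\top w_{\ell-1})$, which on its face perturbs $\rho_\ell$ away from $\rho_{\ell-1}$. I expect this to be absorbed by the renormalization device announced for Theorem \ref{th:1}: after the norm maximization one divides $w_\ell$ by $\|w_\ell\|$ at each layer and passes to the extended $n$‑dimensional vector space, and it is in that limiting sense that the equalities of \eqref{eq:solution1} are intended. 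A self‑contained finite‑width argument would instead need a second‑moment estimate on the truncated bivariate Gaussian to bound the selection bias in the inner‑product and norm estimators; I would flag that as the alternative route and otherwise present the per‑layer angle invariance as the content carried by Proposition \ref{propo:2} and sharpened by Theorem \ref{th:1}.
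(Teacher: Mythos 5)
Your route is genuinely different from the paper's, and the gap you flag at the end is not a technicality that renormalization will absorb --- it is fatal to the step as you have set it up. You try to obtain the chain of equalities in \eqref{eq:solution1} by showing that each layer map $w_{\ell-1}\mapsto G_\ell w_{\ell-1}$ preserves the pairwise cosine exactly, $\rho_\ell=\rho_{\ell-1}$, via the approximate-isometry property of Gaussian matrices. But the rows of $G_\ell$ are not an unbiased Gaussian sample: they are the $k$ rows of $G^*_\ell$ chosen to maximize $\sum_{i\in S}(v_i^\top w_{\ell-1})^2$, i.e.\ precisely the rows most aligned or anti-aligned with $w_{\ell-1}$. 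Conditioned on $|v_i^\top w_{\ell-1}|$ being among the top $k$ of $n$, the component of $v_i^\top w'_{\ell-1}$ correlated with $v_i^\top w_{\ell-1}$ dominates its independent part, so the selected coordinates push $\rho_\ell$ toward $\pm 1$ rather than leaving it fixed; a second-moment computation on the truncated bivariate Gaussian shows this drift is $O(1)$ per layer, not $o(1)$ in the width. Indeed, this selection bias is exactly the mechanism the paper later credits for the collapse of pairwise angles toward $0$ or $\pi$ as $\ell$ grows (Observation~\ref{claim:1}, Theorem~\ref{th:1}, and the convergence plots); if $\rho_\ell=\rho_{\ell-1}$ held at every layer there would be nothing to collapse. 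Your per-layer invariance claim is therefore false as stated, not merely unproven.

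The paper's own proof sidesteps this entirely (at the cost of being a worst-case assertion rather than a distributional statement): it never analyzes the typical behaviour of the selected rows. Instead it fixes one specific configuration among the $\binom{n}{k}$ ways of realizing Hamming distance $k$ --- the one with $v_i^\top w'=-\max\|v_i^\top w\|$, so that the normalized outputs are exactly anti-parallel, $\frac{w_\ell}{\|w_\ell\|}=-\frac{w'_\ell}{\|w'_\ell\|}$, at every layer by construction --- and reads off $d_H(\cdot,\cdot)=k$, hence the same $H_2$ value, at each $\ell$. In other words, the equalities in \eqref{eq:solution1} are asserted for that anti-correlated configuration, not derived as a concentration result over the Gaussian weights. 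Your opening step (invoking Proposition~\ref{propo:2} on the pair $(w_L,w'_L)$ to identify the top line with $H_2(k/n)$ via \eqref{eq:soltuoab9} and \eqref{eq:solgthest24}) does match the paper's base case; it is the layer-to-layer step where the two arguments diverge and where yours breaks.
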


\begin{proof}
As described in Eq. \eqref{eq:stringentrelation}, the parameter-inferred function can be considered a special case of Eq. \eqref{eq:lshbinomialdas0}. In this context, we examine the worst-case scenario. Among the \( \binom{n}{k} \) possible configurations where \( x = \sum_{i=1}^{n} \mathbf{1}_{\mathsf{sgn}(v_i^T \cdot w) \neq \mathsf{sgn}(v_i^T \cdot w')} = k \), we select the configuration that maximizes \( \sum_{i=1}^{k} \|v_i^T \cdot w\| \). The solution is as follows:
\begin{align}\label{eq:fei002r}
x &= \sum_{i=1}^{k} \mathbf{1}_{\max \|v_i^T \cdot w\| \neq v_i^T \cdot w'} = k, \nonumber \\
\text{for} \quad v_i^T \cdot w' &= -\max \|v_i^T \cdot w\|, \quad i = 1, 2, \ldots, k.
\end{align}
We omit the Signum function \( \mathsf{sgn}(\cdot) \) in Eq. \eqref{eq:fei002r} because, under the condition \( v_i^T \cdot w' = -\max \|v_i^T \cdot w\| \), it is clear that \( \mathsf{sgn}(\max \|v_i^T \cdot w\|) \neq \mathsf{sgn}(v_i^T \cdot w') \). Consequently, we have \( x = \sum_{i=1}^{k} \mathbf{1}_{\mathsf{sgn}(\max \|v_i^T \cdot w\|) \neq \mathsf{sgn}(v_i^T \cdot w')} = k \).

Next, we define a maximum likelihood optimization by iteratively maximizing \( \|G_{\ell} w_{\ell-1}\| \) for each weight matrix \( G_{\ell} \in \mathbb{R}^{k \times k} \), input vector \( w_{\ell-1} \in \mathbb{R}^{k} \), and \( \ell = 1, 2, \ldots, L \). The entries of \( G_{\ell} \) follow a standard normal distribution with zero mean and unit variance \( \mathcal{N}(0,1) \). We can represent the \( i \)-th row of \( G_{\ell} \) as \( v_i^T \). Maximizing the norm \( \|G_{\ell} w_{\ell-1}\| \) is achieved by initializing \( G^*_{\ell} \sim{\mathcal{N}(0,1)}\) and selecting \( k \) rows to form \( G_{\ell} \) such that \( \|G_{\ell} w_{\ell-1}\| \) is maximum, as shown in Algorithm \ref{algo:maxlik}.

By normalizing the output vectors to unit norm, we express Eq. \eqref{eq:fei002r} in terms of the worst-case Hamming distance \( d_H(\cdot, \cdot) \) between the normalized output vectors \( \frac{w_{\ell}}{\|w_{\ell}\|} \) and \( \frac{w'_{\ell}}{\|w'_{\ell}\|} \):
\begin{align}\label{eq:fei069702r}
&x = \sum_{i=1}^{k} \mathbf{1}_{\max \|v_i^T \cdot w\| \neq v_i^T \cdot w'} = d_H\left(\frac{\max(w_{\ell})}{\|\max(w_{\ell})\|}, \frac{w'_{\ell}}{\|w'_{\ell}\|}\right) = k, \nonumber \\
&\text{for} \quad \frac{\max(w_{\ell})}{\|\max(w_{\ell})\|} = -\frac{w'_{\ell}}{\|w'_{\ell}\|}, \quad \ell \geq 1.
\end{align}
For the case where \( \ell = L = 1 \), the above condition implies that \( \frac{\max(w_{\ell})}{\|\max(w_{\ell})\|} = \frac{w_{L}}{\|w_{L}\|} =-\frac{w'_{L}}{\|w'_{L}\|} \), where \( x = d_H\left(\frac{w_{L}}{\|w_{L}\|}, \frac{w'_{L}}{\|w'_{L}\|}\right) = k \), yielding a corresponding solution described in Eq. \eqref{eq:soltuoab9} for \( H_2\left( \frac{1}{\pi} \arccos\left( \frac{w_{L-1}}{\|w_{L-1}\|} \cdot \frac{w'_{L-1}}{\|w'_{L-1}\|} \right) \right)=H_2\left( \frac{1}{\pi} \arccos\left( \frac{w}{\|w\|} \cdot \frac{w'}{\|w'\|} \right) \right) \). The same reasoning extends to all \( \ell \geq 1 \), leading to 
\begin{align}
x &= d_H\left(\frac{w_{L}}{\|w_{L}\|}, \frac{w'_{L}}{\|w'_{L}\|}\right) = d_H\left(\frac{w_{L-1}}{\|w_{L-1}\|}, \frac{w'_{L-1}}{\|w'_{L-1}\|}\right) \nonumber \\
&=\ldots = d_H\left(\frac{w_{2}}{\|w_{2}\|}, \frac{w'_{2}}{\|w'_{2}\|}\right) = d_H\left(\frac{w_{1}}{\|w_{1}\|}, \frac{w'_{1}}{\|w'_{1}\|}\right) = k,
\end{align}
thus yielding Eq. \eqref{eq:solution1} for \( \ell = 1, 2, \ldots, L \).
\end{proof}

\begin{algorithm}
\caption{MaxLikelihood Optimization Algorithm}\label{algo:maxlik}
\begin{algorithmic}[1]
\Function{MaxLikelihood}{$w, n, L$}
    \State Initialize $\ell = 1$, 
    \State Set $w_{\ell -1}=w$
    \State Set $k$ as the length of $w_{\ell -1}$
    \While{$\ell  \leq L$} 
        \State Initialize a random matrix $G^*_{\ell } \in \mathbb{R}^{n \times k}$ \Comment{$G^*_{\ell} \sim \mathcal{N}(0,1)$}
        \State Compute $x_{\ell}=\argmax_{{w_{\ell}\subset{G^*_{\ell}w_{\ell-1}}} }{\norm{G^*_{\ell}w_{\ell-1}}}$
        \State Record $G_{\ell} \subset G^*_{\ell}$  \Comment{Dropout step occur here s.t. $G_{\ell}w_{\ell-1} = w_{\ell}$, where $G_{\ell}\in{\RR^{k\times{k}}}$}
        \State Set $\ell = \ell + 1$
    \EndWhile
    \State \textbf{Return} output model $H_{\textnormal{Linear}}(\cdot) = G_L\ldots G_2 G_1(\cdot)$, and output vector $w_{L}\in{\RR^{k}}$ \Comment {$w_{L} = H_{\textnormal{Linear}}(w)$}
\EndFunction
\end{algorithmic}
\end{algorithm}

 \section{Convergence Testing}\label{sec:testconvergedidsafggagram}
\begin{figure}[!ht]
\centering
  \includegraphics[scale=0.46]{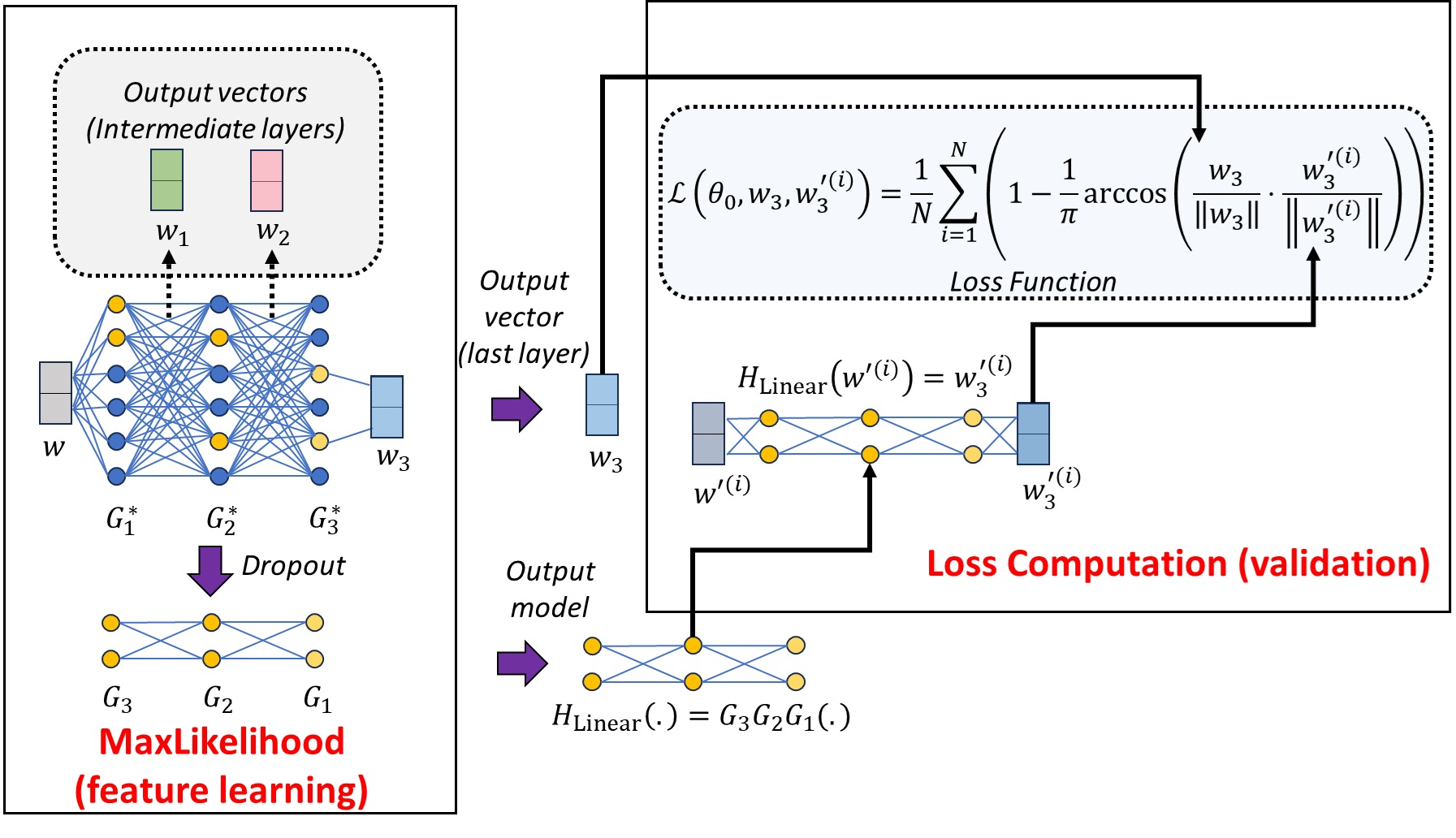}
  \caption{Convergence test with MaxLikelihood algorithm (with input length \( k = 2 \) and number of layer $L=3$ used here as an example for illustration purpose).}\label{fig:testconvergediagram}
\end{figure}
In this section, we present a test to evaluate the convergence properties of Algorithm \ref{algo:maxlik}.
\subsection{Convergence Test Set-up}
In our convergence test, as illustrated in Fig. \ref{fig:testconvergediagram}, an input sample \( w \in \mathbb{R}^k \) of arbitrary length \( k \) is provided to the MaxLikelihood algorithm (Algorithm \ref{algo:maxlik}) to extract its corresponding output vector \( w_L \) at the final layer \( \ell = L \). Subsequently, dropout is applied to yield the output model \( H_{\text{Linear}}(.) = G_L \ldots G_2 G_1(.) \), which is then used for testing on new, unseen samples \( w'^{(i)} \) (for \( i = 1, 2, \ldots, N \)).

Our convergence test aims to demonstrate the capability of the proposed optimization in satisfying the condition outlined in Eq. \eqref{eq:fei069702r}. More precisely, by focusing solely on the output vector of the last layer, we require:
\begin{align}
\frac{w_{L}}{\norm{ w_{L}}} = -\frac{w'_{L}}{\norm{w'_{L}}}, \quad \ell = L,
\end{align}
which implies a state of \textit{perfect anti-correlation} between the output vector \( w_L \) of maximum norm and an arbitrary test vector \( w'_L \), after both are normalized to unit norm. With this requirement, it is straightforward to validate the convergence of the proposed optimization algorithm by choosing an objective (loss) function described as:
\begin{align}\label{eq:ffshau93}
\mathcal{L}(\theta^{(i)}_0,w_L, w'^{(i)}_L)=\frac{1}{N}\sum_{i}^{N}1-\frac{1}{\pi}\arccos(\frac{w_L}{\norm{w_L}}\frac{w'^{(i)}_L}{\norm{w'^{(i)}_L}})\in{(0,1)}.
\end{align}
The convergence of the objective function toward zero implies \( \frac{w_{L}}{\norm{w_{L}}} = -\frac{w'^{(i)}_{L}}{\norm{w'^{(i)}_{L}}} \) for \( i = 1, \ldots, N \) test samples, as \( \arccos(-1) = \pi \).

 \begin{figure*}[!htp]
\centering
  \includegraphics[scale=0.90]{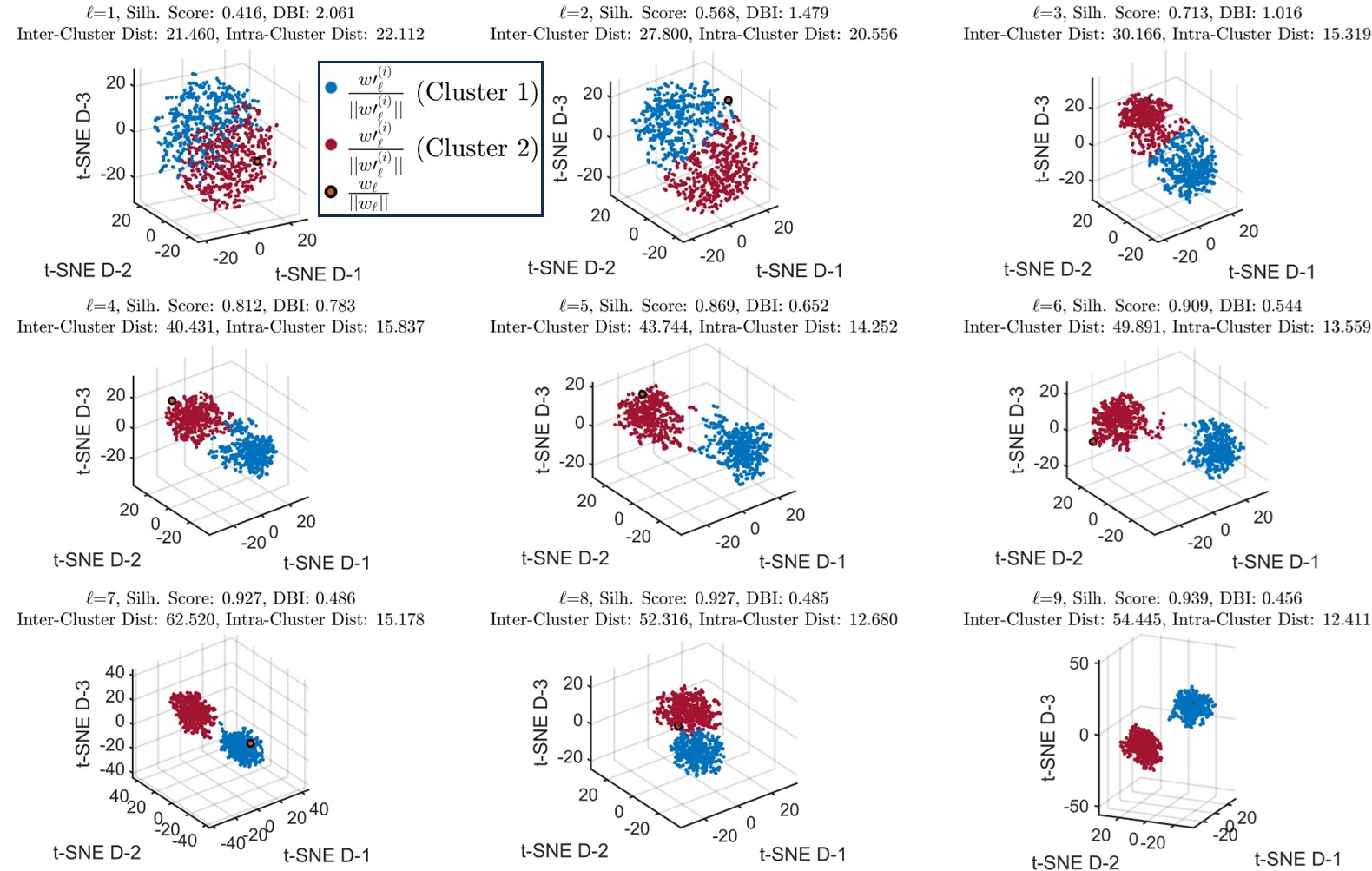}
  \caption{t-SNE low-dimension (3-D) representation of the normalized features, with $L=9$.}
  \label{fig:6716}
\end{figure*}
\begin{figure*}[!htp]
\centering
  \includegraphics[scale=0.70]{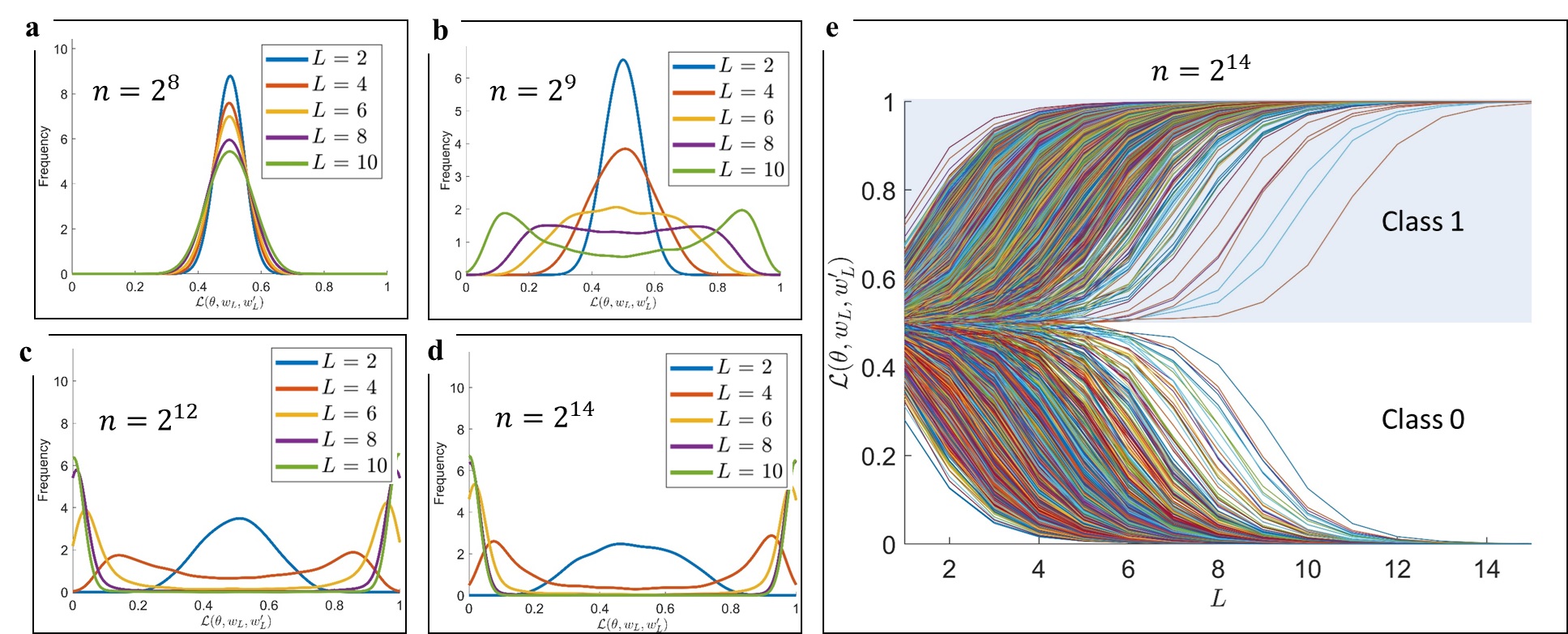}
  \caption{Results of the convergence test, showing the distribution of loss values and the convergence patterns.}
\label{fig:testconvergencol2}
\end{figure*}
\begin{figure}[!htp]
\centering
  \includegraphics[scale=0.46 ]{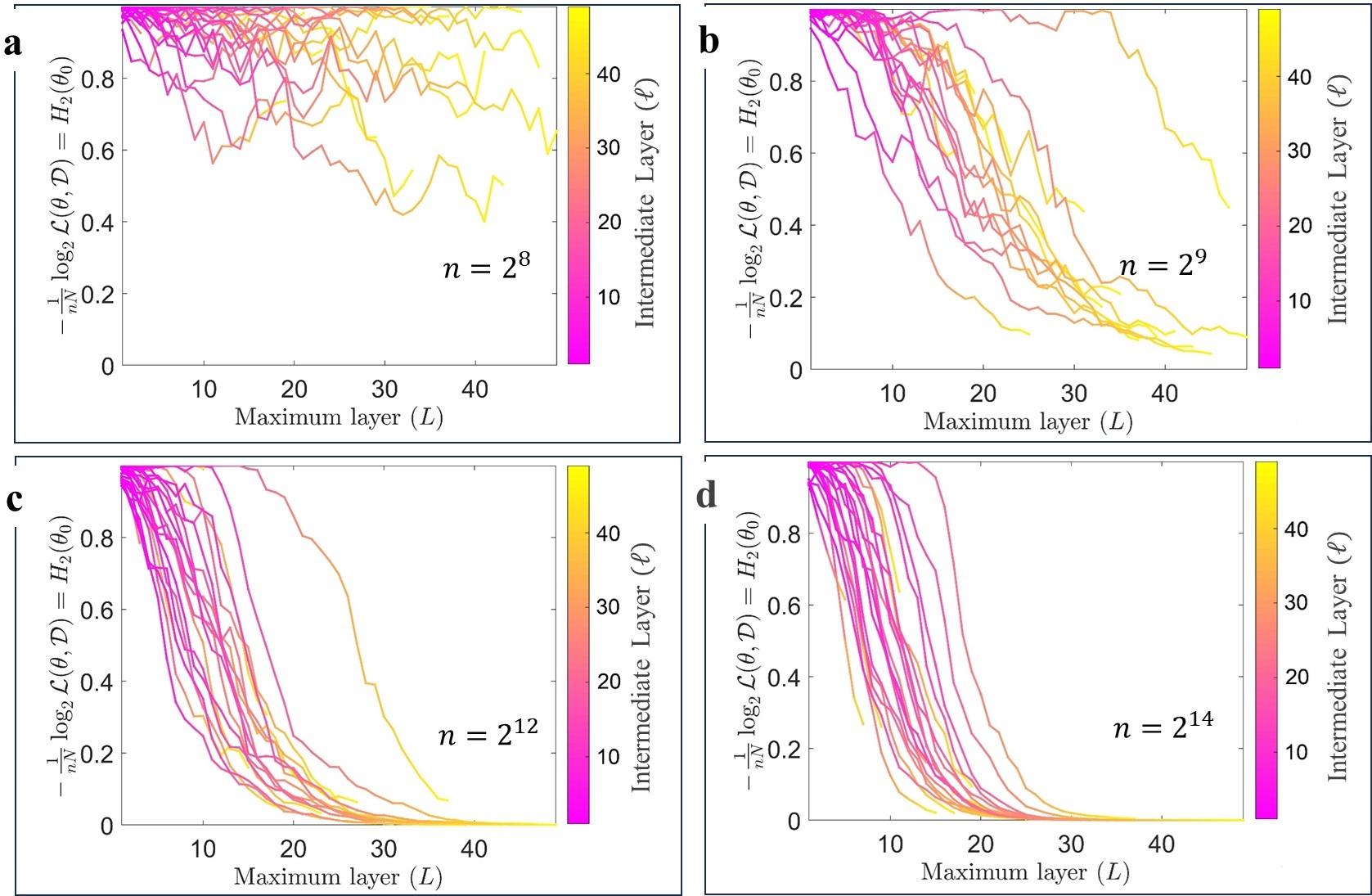}
  \caption{Convergence of the negative log-likelihood toward zero. Testing is repeated for increasing $n$ and \(L=1,3,5,7,\ldots,49\).}
  \label{fig:testconvergediagramglobal}
\end{figure}
  
\subsection{t-SNE Plot and Convergence Test Results}

\textbf{t-Distributed Stochastic Neighbor Embedding (t-SNE) Plot:} Figure \ref{fig:6716} displays a t-SNE plot of the normalized output vectors \( \frac{w_L}{\|w_L\|} \) (from the feature learning phase) and \( \frac{w'^{(i)}_L}{\|w'^{(i)}_L\|} \) (from the validation phase), where \( i = 1, 2, \dots, N \) and \( k = 256 \). For this visualization, we randomly select a subset of \( N = 5000 \) test samples drawn from a Gaussian distribution with zero mean and unit variance. The t-SNE method projects these high-dimensional vectors into a lower-dimensional space, allowing us to visualize their distribution. The k-means algorithm is applied to identify potential clusters within the test vectors \( \frac{w'^{(i)}_{\ell}}{\|w'^{(i)}_{\ell}\|} \) as they progress through deeper layers of the model (i.e., as \( \ell \rightarrow L \)).

At the first layer (\( \ell = 1 \)), the output vectors are evenly distributed within a confined region. As \( \ell \) increases, the vectors undergo successive optimization, and by the final layer (\( \ell = L \)), they collapse into two well-separated clusters—one centered around \( \frac{w_L}{\|w_L\|} \) and the other displaced further away.

Clustering performance is evaluated using four key metrics: Silhouette Score, Davies-Bouldin Index (DBI), Inter-Cluster Distance, and Intra-Cluster Distance. The Silhouette Score assesses cohesion and separation, with higher values indicating better clustering. The DBI measures the ratio of intra-cluster to inter-cluster distances, with lower values suggesting more distinct clusters. Inter-Cluster Distance quantifies the separation between clusters, while Intra-Cluster Distance evaluates cluster compactness.

As \( \ell \to L \), all metrics show consistent improvement. Silhouette Scores increase, indicating better cohesion and separation. The DBI decreases, signifying more distinct clusters. Inter-Cluster Distances increase, demonstrating improved separation, while Intra-Cluster Distances decrease, indicating tighter clusters. These trends collectively demonstrate a clear enhancement in clustering performance, suggesting that higher values of \( \ell \) lead to more accurate and well-defined clusters.

\textbf{Convergence Test Result:} The convergence test results are shown in Fig. \ref{fig:testconvergencol2}. For large values of \( L \) and \( n \) (see panels (a), (b), (c), and (d)), we observe the same collapsing phenomenon, where the computed loss \( \mathcal{L}(\theta_0, w_L, w'_L) \) divides into two distinct groups, converging to either zero or one. This collapse follows a clear, symmetrical pattern. As shown in panel (e), loss values below 0.5 (the ideal boundary between the two classes) converge toward zero, while those above 0.5 converge toward one. This behavior simplifies the classification task, as the classifier can easily select the class whose mean is closest to the test vector.

Based on these observations, we can make the following formal statement:

\begin{Observation} \label{claim:1}
By selecting sufficiently large values of \( n \) and \( L \)—typically representing an overparameterized regime with more parameters than training samples—the output model $H_{\text{Linear}}(.)$ can achieve arbitrarily close to zero loss, leading to a perfect anti-correlation scenario. Specifically, the loss function (described in Eq. \eqref{eq:ffshau93}) has a solution described as:

\begin{align}\label{eq:23antico4}
&\mathcal{L}(\theta^{(i)}_0, w_L, w'^{(i)}_L) \nonumber \\
&= \frac{1}{N} \sum_{i=1}^{N} \left( 1 - \frac{1}{\pi} \arccos \left( \frac{w_L}{\|w_L\|} \cdot \frac{w'^{(i)}_L}{\|w'^{(i)}_L\|} \right) \right), \nonumber \\
&= \frac{1}{N} \sum_{i=1}^{N} \left( 1 - \theta^{(i)}_0 \right)= 0, \ \textnormal{where} \ \underbrace{\frac{w_L}{\|w_L\|} = -\frac{w'^{(i)}_L}{\|w'^{(i)}_L\|}}_\textnormal{(Perfect Anti-Correlation)}.
\end{align}
Meanwhile, due to the collapse of the computed loss into two distinct groups, the following condition of perfect correlation also emerges:
\begin{align}\label{eq:23antico5}
&\mathcal{L}(\theta^{(i)}_0, w_L, w'^{(i)}_L)_+ \nonumber\\
&= \frac{1}{N} \sum_{i=1}^{N} \frac{1}{\pi} \arccos \left( \frac{w_L}{\|w_L\|} \cdot \frac{w'^{(i)}_L}{\|w'^{(i)}_L\|} \right), \nonumber \\
&= \frac{1}{N} \sum_{i=1}^{N} \theta^{(i)}_0 = 0, \ \textnormal{where} \ \underbrace{\frac{w_L}{\|w_L\|} = \frac{w'^{(i)}_L}{\|w'^{(i)}_L\|}}_\textnormal{(Perfect Correlation)}.
\end{align}
\end{Observation}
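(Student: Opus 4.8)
The plan is to establish the Observation as a limiting/asymptotic consequence of Proposition \ref{eq:proposition13} together with the concentration properties of the cosine-LSH construction, rather than as an exact algebraic identity. First I would recall from Proposition \ref{eq:proposition13} that, for the output model produced by Algorithm \ref{algo:maxlik}, the worst-case configuration over the $\binom{n}{k}$ sign patterns is the one satisfying $\frac{\max(w_L)}{\|\max(w_L)\|} = -\frac{w'_L}{\|w'_L\|}$, i.e. perfect anti-correlation; this is precisely the event $\theta_0^{(i)} = 1$ in Eq. \eqref{eq:ffshau93}. So the first step is to translate the per-layer norm-maximization into a statement about the angle $\arccos\!\big(\frac{w_L}{\|w_L\|}\cdot\frac{w'^{(i)}_L}{\|w'^{(i)}_L\|}\big)$: because each layer selects the $k$ rows of $G^*_\ell \sim \mathcal{N}(0,1)^{n\times k}$ maximizing $\|G_\ell w_{\ell-1}\|$, as $n$ grows the retained rows align ever more tightly with the (fixed) direction $w_{\ell-1}/\|w_{\ell-1}\|$, so the angle between $w_\ell$ and that direction shrinks while a generic test vector $w'_\ell$ is pushed toward the antipode. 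Iterating over $\ell = 1,\dots,L$ compounds this effect, which is the content of the t-SNE collapse observed in Fig. \ref{fig:6716}.

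Next I would make the two-group (bimodal) split precise. The key observation is that the loss in Eq. \eqref{eq:ffshau93} is $\frac1N\sum_i \big(1-\theta_0^{(i)}\big)$ where $\theta_0^{(i)} = \frac1\pi\arccos\big(\frac{w_L}{\|w_L\|}\cdot\frac{w'^{(i)}_L}{\|w'^{(i)}_L\|}\big) \in (0,1)$. Since the normalized features collapse onto essentially two antipodal directions $\pm w_L/\|w_L\|$, each test sample lands (after the $L$-fold optimization) arbitrarily close to one of these two points, so $\theta_0^{(i)} \to 1$ for the anti-correlated subset and $\theta_0^{(i)} \to 0$ for the correlated subset. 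Restricting the average to the anti-correlated subset gives Eq. \eqref{eq:23antico4}: $\frac1N\sum_i(1-\theta_0^{(i)}) \to 0$. Restricting instead to the correlated subset and using the complementary loss $\mathcal{L}_+ = \frac1N\sum_i \theta_0^{(i)}$ gives Eq. \eqref{eq:23antico5}: $\frac1N\sum_i\theta_0^{(i)} \to 0$. Both are simply the two halves of the collapsed bimodal distribution, each of which can be driven below any $\varepsilon>0$ by taking $n$ (and hence the number of candidate rows per layer) and $L$ large enough.

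For the quantitative core I would invoke the Goemans--Williamson / Charikar identity already used in Proposition \ref{propo:2}, namely $\mathbb{E}[\beta_i] = \frac1\pi\arccos\big(\frac{w}{\|w\|}\cdot\frac{w'}{\|w'\|}\big)$, and combine it with a simple extreme-value estimate: among $n$ i.i.d.\ standard Gaussian rows, the $k$ with largest $|v_i^\top w_{\ell-1}|$ have normalized projection onto $w_{\ell-1}/\|w_{\ell-1}\|$ that tends to $1$ as $n/k \to \infty$, so the selected submatrix $G_\ell$ acts, in the limit, like a (scaled) alignment onto the current feature direction, forcing $w_\ell/\|w_\ell\| \to w_{\ell-1}/\|w_{\ell-1}\|$ up to sign and an arbitrary test direction toward $-w_{\ell-1}/\|w_{\ell-1}\|$ in the worst case. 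A union bound over the $N$ test samples and over the $L$ layers then yields convergence of both subsampled losses to zero with high probability.

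The main obstacle I anticipate is making the ``worst-case configuration'' selection in Proposition \ref{eq:proposition13} consistent with the way Algorithm \ref{algo:maxlik} actually operates: the algorithm maximizes $\|G_\ell w_{\ell-1}\|$ for a \emph{fixed} feature-learning input $w$, but the test vectors $w'^{(i)}$ are only seen afterward, so the anti-correlation $\frac{w_L}{\|w_L\|} = -\frac{w'^{(i)}_L}{\|w'^{(i)}_L\|}$ cannot literally hold for all $i$ simultaneously and must be interpreted as the collapse of the test features onto the two-point set $\{\pm w_L/\|w_L\|\}$, with the split into the two summations of Eqs. \eqref{eq:23antico4} and \eqref{eq:23antico5} being exactly the partition of the test set induced by which antipode each sample collapses to. Handling the non-asymptotic regime rigorously (finite $n$, finite $L$) would require an explicit rate for the extreme-value alignment step; I would state the result as ``arbitrarily close to zero'' and defer sharp rates, consistent with the empirical evidence in Figs. \ref{fig:testconvergencol2} and \ref{fig:testconvergediagramglobal}.
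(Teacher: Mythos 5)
The paper does not actually prove Observation~\ref{claim:1}: it is stated as an empirical finding (``Based on these observations, we can make the following formal statement'') supported by the convergence tests in Figs.~\ref{fig:testconvergencol2} and~\ref{fig:testconvergediagramglobal}, with the mechanistic explanation deferred to Theorem~\ref{th:1}, which works in the extended space $\mathbb{R}^{k+1}$ and argues that normalization suppresses the layer-index coordinate so that the angle parameter $\theta$ collapses to $0$ (equivalently $1-\theta \to 1$). Your proposal therefore takes a genuinely different route: you try to derive the bimodal collapse directly from the row-selection step of Algorithm~\ref{algo:maxlik}, via an extreme-value argument that the $k$ retained Gaussian rows align with $\pm w_{\ell-1}/\|w_{\ell-1}\|$, so that $G_\ell$ acts asymptotically as a signed projection onto the current feature direction and every test vector is driven to one of the two antipodes $\pm w_L/\|w_L\|$. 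This is more ambitious than what the paper offers, and your reinterpretation of ``perfect anti-correlation'' as collapse onto the two-point set $\{\pm w_L/\|w_L\|\}$ --- with Eqs.~\eqref{eq:23antico4} and~\eqref{eq:23antico5} being the two halves of the induced partition of the test set --- is exactly the reading needed to make the statement coherent; the paper itself never confronts the fact that a single fixed $w_L$ cannot be simultaneously anti-correlated with all $N$ test features.

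That said, your argument is a sketch rather than a proof, and the load-bearing step is asserted without computation: namely, that the top $k$ of $n$ i.i.d.\ rows ranked by $|v_i^{\top} w_{\ell-1}|$ have normalized projection onto $w_{\ell-1}/\|w_{\ell-1}\|$ tending to $1$. Since $v_i^{\top} w_{\ell-1} \sim \mathcal{N}(0,\|w_{\ell-1}\|^2)$ while the component of $v_i$ orthogonal to $w_{\ell-1}$ has norm concentrating around $\sqrt{k-1}$, the selected rows align only when $\log(n/k) \gg k$, i.e.\ $n$ exponentially large in $k$; ``sufficiently large $n$'' covers this, but the rate is what makes the compounding over $L$ layers and the union bound over $N$ test samples actually close, so it cannot be omitted. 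You would also need a quantitative per-layer contraction estimate to control error accumulation: the per-layer alignment is only approximate, and without it the claim that iterating $L$ times drives every $\theta_0^{(i)}$ all the way to $0$ or $1$ does not follow. None of this contradicts the paper --- which proves nothing here --- but as written your proposal establishes the Observation only at roughly the same heuristic level as the paper's own discussion.
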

Note that zero losses in Eq. \eqref{eq:23antico4} and Eq. \eqref{eq:23antico5} correspond to zero value in the negative log-likelihood, as seen in Eq. \eqref{eq:soltuoab9}, due to the symmetry of \( H_2(\theta^{(i)}_0) = H_2(1 - \theta^{(i)}_0) \). Thus, these computed losses align with the analytical optimal solution derived from MLE.

Following the same convergence testing setup, the convergence of the negative log-likelihood toward zero as \( n \) and \( L \) increase is shown in Figure \ref{fig:testconvergediagramglobal}.

\section{Unsupervised Neural Collapse: Insight and Application} \label{sec:negeah9}
The observed collapsing behavior in our convergence test demonstrates robust generalization, as it indicates that the model has learned a structured feature space capable of distinguishing between familiar (near-zero loss) and unfamiliar (near-one loss) test samples, even those unseen during feature learning. As the model progresses with increasing width \( n \) and depth \( L \), there is an increasing marginal separation between these two classes.

Given the empirical evidence, we can relate this collapsing phenomenon to Neural Collapse (NC), as illustrated in Fig. \ref{fig:nca}, a notable phenomenon first introduced by Papyan et al. \cite{papyan2020prevalence} (for a comprehensive review, see Kothapalli et al. \cite{kothapalli2022neural}). 

\begin{figure}[!htp]
\centering
  \includegraphics[scale=0.15]{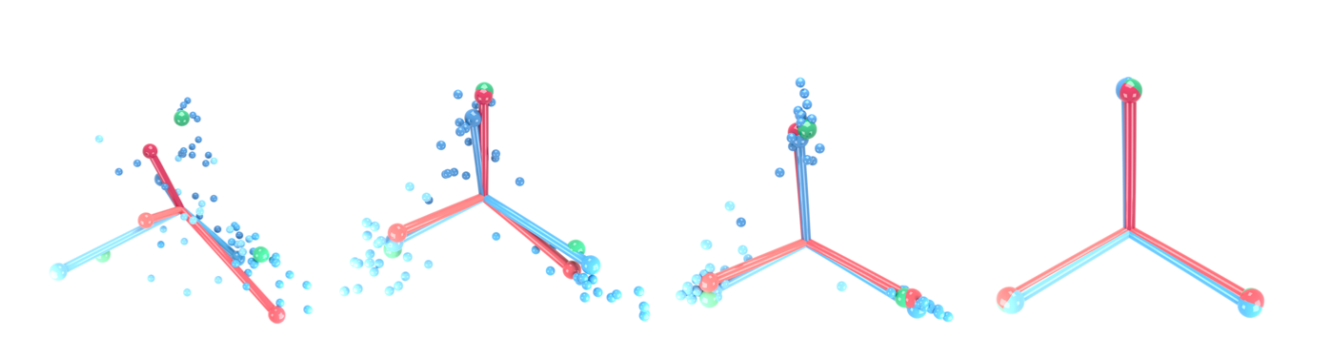}
  \caption{The neural collapse phenomenon (image sourced from and credited to \cite{papyan2020prevalence}).}
  \label{fig:nca}
\end{figure}

NC describes four key properties that emerge in the final layers (\( \ell = L \)) of deep neural networks trained beyond zero error. Our results, shown in Fig. \ref{fig:6716} and Fig. \ref{fig:testconvergencol2} panels (a), (b), (c), (d), (e), clearly exhibit these four properties of NC, as outlined below:

\begin{enumerate}
    \item \textbf{Collapse of variability:} Data samples from the same class converge towards their class mean in the final layer, reducing intra-class variability and collapsing towards a single-point representation.
    \item \textbf{Convergence to simplex equiangular tight frame (ETF):} The class-wise means of the last layer features align to form a simplex ETF, a symmetrical structure on a hypersphere, maximizing the distance between class means and ensuring linear separability.
    \item \textbf{Convergence to self-duality:} The class means and linear classifiers converge towards each other, potentially with rescaling, illustrating self-duality in the network’s feature representation.
    \item \textbf{Simplification to nearest class center (NCC):} During classification, the linear classifier selects the class whose mean is closest to the test sample.
\end{enumerate}
\subsection{Clustering and Classification Applications}
The convergence of the negative log-likelihood to zero (Eq.~\eqref{eq:23antico4} and Eq.~\eqref{eq:23antico5}) indicates that, despite achieving global minimization, the model implicitly distinguishes between two distinct groups: one characterized by perfect correlation (class mean of zero) and the other by perfect anti-correlation (class mean of one). This distinction is crucial as it prevents overfitting, even in the overparameterized regime. Instead of merely memorizing individual data points, the model organizes the data into two well-separated clusters in the feature space. This behavior, driven by the implicit bias toward Neural Collapse, arises naturally through maximum likelihood estimation in an unsupervised setting. This mechanism functions as a regularizer, enabling the model to generalize effectively without succumbing to overfitting.

\textbf{Clustering via Memorization:} Given the strong generalization and collapsing properties of the model, we explore its potential for unsupervised learning tasks, particularly in extracting meaningful representative features from unseen test samples. Here, we demonstrate how clustering can be achieved using the output model \( H_{\text{Linear}}(\cdot) \), as illustrated in Figure~\ref{fig:lvclusterfig}. The approach is conceptually simple: it leverages the model's ability to collapse test samples into two distinct clusters while memorizing their trajectories across layers. At higher layer \( \ell\), the inter-cluster distance increases, while the intra-cluster distance decreases. By tracing and memorizing the hierarchical evolution of these clusters across layers (e.g. $\ell=1,2,\ldots,{L}$), the final output distribution comprises \( 2L \) distinct clusters. This method provides an effective means of clustering without the need for labeled data, relying on the model's ability to separate features and encode the trajectory of cluster formation, layer by layer.

Figure \ref{fig:clus6f716} illustrates the clustering results for \( L \) sets of \( N = 500 \) random test samples, \( w'^{(i)} \), where \( i = 1, 2, \dots, N \) (with each set corresponding to a different layer). These samples are randomly drawn from a Gaussian distribution with zero mean and unit variance. Each output vector is normalized to unit norm before being projected into a lower-dimensional space using t-SNE. The plot visually demonstrates the model's ability to effectively cluster the data in an unsupervised setting. To further enhance the visualization, we applied the k-means clustering algorithm, with each distinct cluster assigned a unique color.

\begin{figure}[!htp]
\centering
  \includegraphics[scale=0.225]{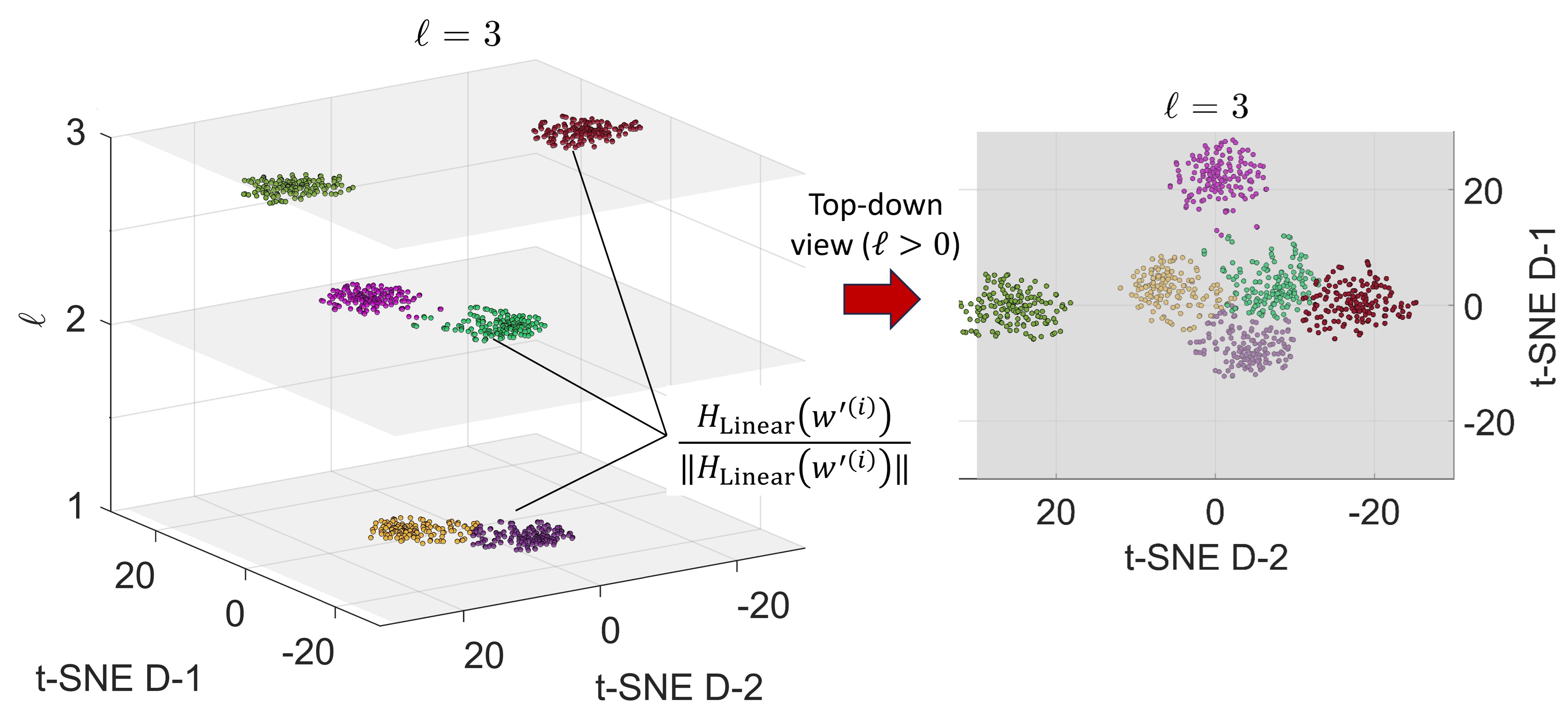}
  \caption{An analogy illustrating the output model for clustering across \( L = 3 \) layers. As $\ell\rightarrow{L}$ increases, new, tighter clusters form with greater inter-cluster separation. }
  \label{fig:lvclusterfig}
\end{figure}
\begin{figure*}[!htp]
\centering
  \includegraphics[scale=0.18]{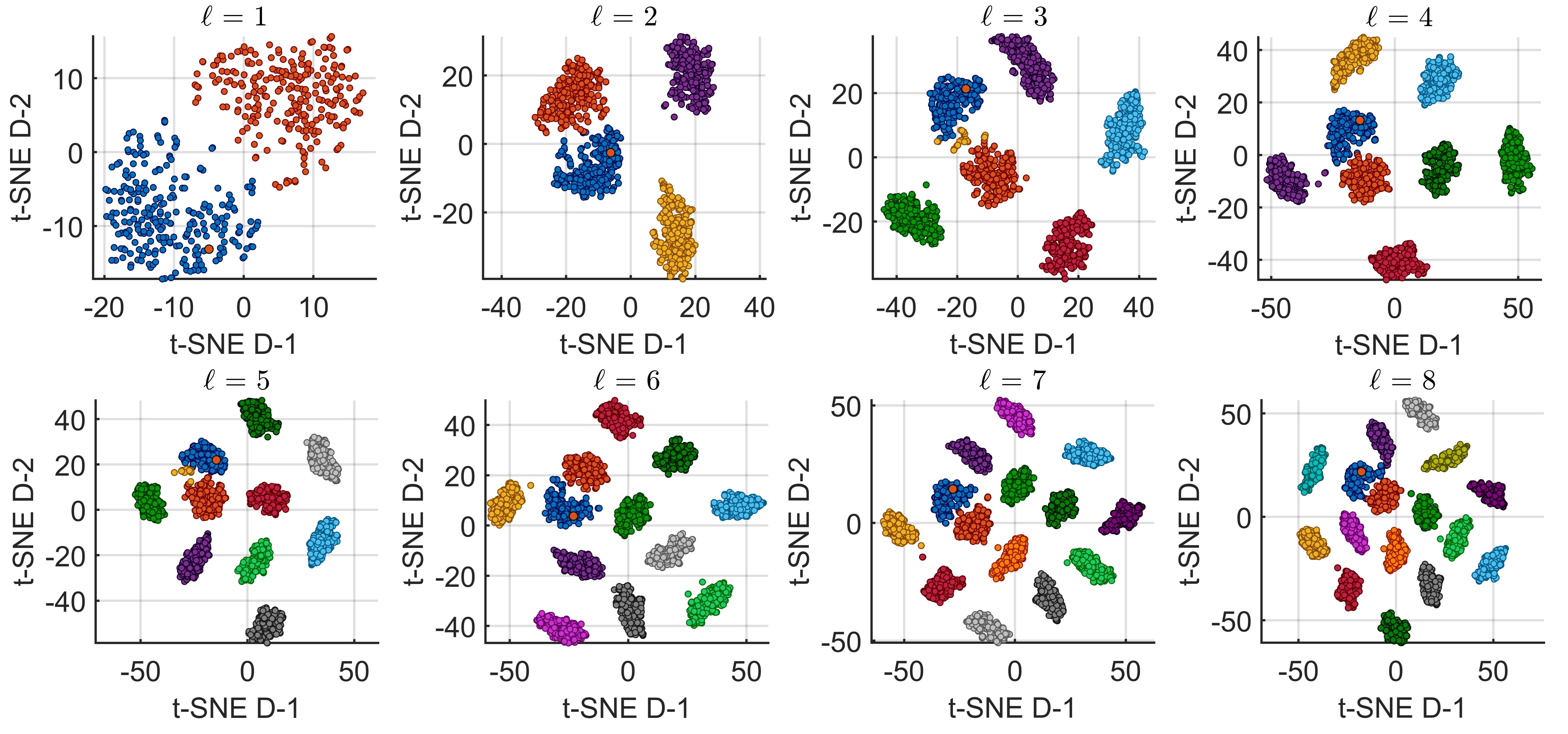}
  \caption{Cluster formation from random test samples \( w'^{(i)} \in \mathbb{R}^k \) (for \( i = 1, 2, \dots, N \)) is visualized in a lower-dimensional space using a 2D t-SNE plot. As the number of layers \( L \) increases, a total number of data points, denoted as \( LN \), are organized into $2L$ clusters. The model parameters are as follows: input dimension \( k = 15 \), network width \( n = 2^{25} \), and depth \( L = 8 \).}
  \label{fig:clus6f716}
\end{figure*}
\begin{figure*}[!htp]
\centering
  \includegraphics[scale=0.65]{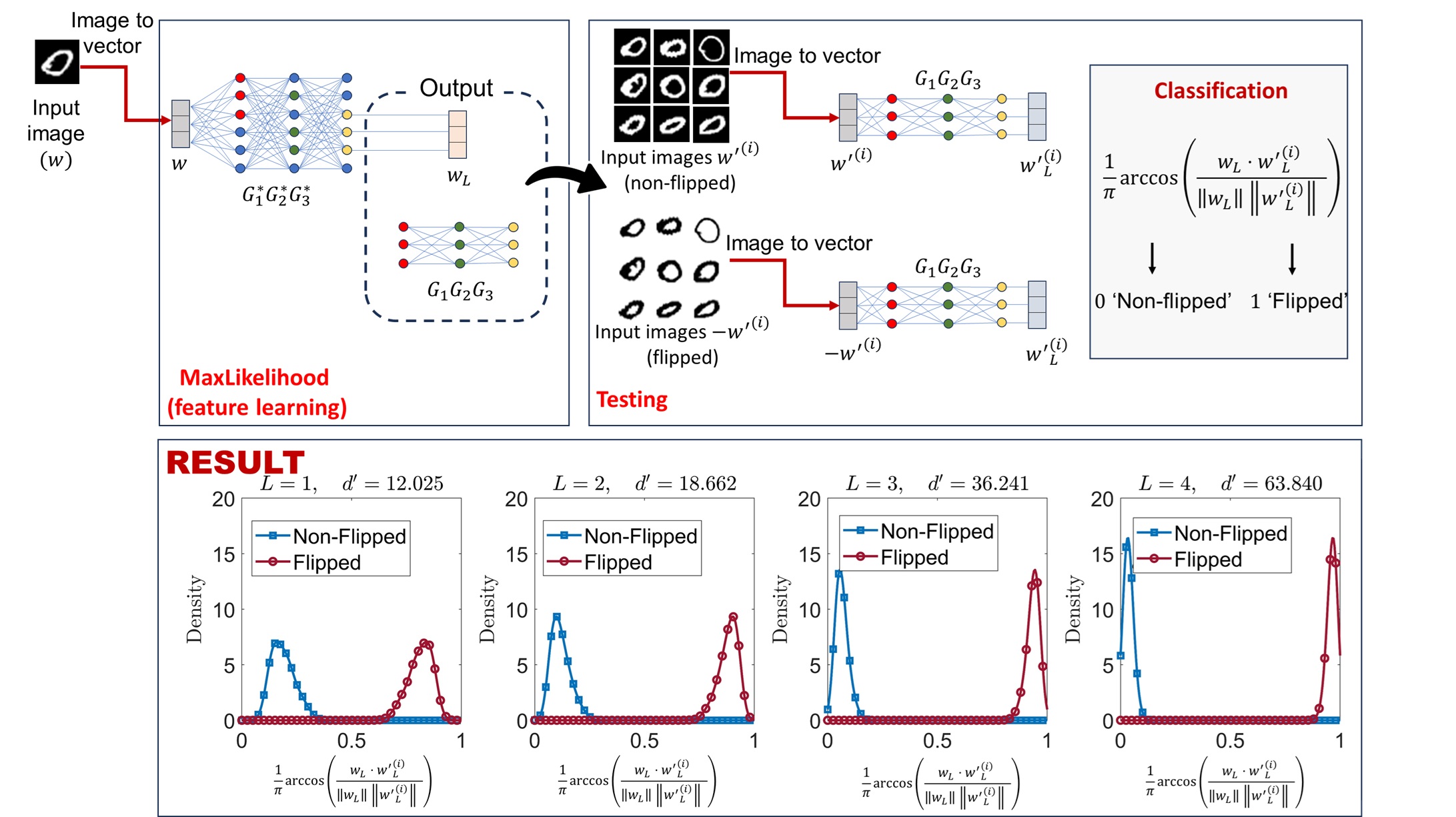}
  \caption{Flipped label experiment workflow and results demonstrating the improved classification performance with increasing network depth $L$, as measured by decidability index $d'$.  }
  \label{fig:frame2}
\end{figure*}

\textbf{Flipped Labels Classification}: Here, we demonstrate how the output model \( H_{\textnormal{Linear}}(\cdot) \) generalizes and enhances classification performance for arbitrarily flipped labels. This classification is based on the output feature vector \( w_L \), extracted during the unsupervised feature learning phase. The overall experimental procedure is illustrated in Figure~\ref{fig:frame2}.  

In this experiment, we utilize the MNIST dataset~\cite{deng2012mnist}, which consists of images of handwritten digits (0–9). Specifically, we focus on the single-digit ‘0’ scenario. The experiment comprises two main steps:  

\begin{enumerate}  
    \item \textit{Feature Learning}: The model is optimized using an input sample \( w \in \mathbb{R}^{k} \) and Algorithm~\ref{algo:maxlik}. The output model \( H_{\textnormal{Linear}}(\cdot) \) generates a feature vector \( w_L \), corresponding to a non-flipped label (e.g., digit ‘0’ with a black background).  

    \item \textit{Testing}: New feature vectors \( w'^{(i)}_{L} \) (\( i=1,2,\dots,N \)) are generated from previously unseen images using \( H_{\textnormal{Linear}}(\cdot) \). The objective is to classify each new feature vector \( w'^{(i)}_{L} \) as either belonging to a non-flipped or flipped digit.  
\end{enumerate}  

Figure~\ref{fig:frame2} illustrates the flipped label classification procedure, where we tested for \( L = 1,2,3,4 \) with network width \( n = 8000 \) and \( N = 3000 \) test samples, which were never seen by \( H_{\textnormal{Linear}}(\cdot) \) during the feature learning phase. For flipped label data, we apply sign flipping by multiplying each pixel value by \(-1\), effectively inverting the digit’s color (i.e., black to white) after normalization (dividing by the maximum grayscale value, 255). This produces an additional \( N = 3000 \) flipped test samples. Classification is performed by computing the pairwise angle difference between the output vector \( w_L \) and the testing vector \( w'^{(i)}_L \), given by:  
\begin{equation} \frac{1}{\pi} \arccos \left( \frac{w_L \cdot w'^{(i)}_L}{\|w_L\| \|w'^{(i)}_L\|} \right)\in (0,1).
\end{equation}  

The results shown in Figure~\ref{fig:frame2} highlight that, due to the collapse in the output feature space, as the network depth increases from \( L=1 \) to \( L=4 \), the two distributions (blue and red curves) shift in opposite directions. This demonstrates the model's ability to assert strong confidence in classifying the testing vector as either flipped or non-flipped, relative to the normalized output vector \( \frac{w_L}{\|w_L\|} \) extracted during the feature learning phase.  

To further quantify this improvement in classification confidence, we adopt the concept of \textit{decidability} from Daugman~\cite{daugman2009iris}. In a binary classification task (e.g., YES or NO decisions), the separation between two distributions—such as the red (flipped) and blue (non-flipped) curves in Figure~\ref{fig:frame2}—can be measured using the \textit{decidability index} \( d' \), defined as:  
\begin{equation}\label{eq:rr9}
d' = \frac{|\mu_1 - \mu_2|}{\sqrt{\frac{\sigma_1^2 + \sigma_2^2}{2}}},
\end{equation}  
where \( \mu_1, \mu_2 \) and \( \sigma_1, \sigma_2 \) represent the means and standard deviations of the two distributions, respectively.  

Clearly, the decidability score \( d' \) increases with increasing \( L \), indicating the output model \( H_{\textnormal{Linear}}(\cdot) \) generalizes to unseen images with improved confident in classification task. 

While most DNN architectures can classify flipped and non-flipped samples, our approach is unique in its unsupervised nature, relying on a single input sample for feature learning without gradient descent. This opens new avenues for studying DNN generalization behavior and offers new insights beyond conventional supervised learning frameworks.

\section{Convergence Analysis Within Extended Vector Space}
In this section, we deepen our analysis of the observed collapsing phenomenon by investigating its underlying causes. We formalize the representation of a feedforward linear neural network within an extended vector space, \( \mathbb{R}^{k+1} \), which incorporates information about each network layer. This framework enables us to trace the trajectory of an input vector \( w \in \mathbb{R}^{k} \) as it propagates through the network \( H_{\text{Linear}}(\cdot) \) with an arbitrary fixed width parameter \( n \). 

We introduce an additional dimension to represent the layer index \( \ell = 1, 2, \dots, L \), which indicates the position of the transformed vector at each layer. For example, consider an arbitrary input vector \( w = [x; y] \in \mathbb{R}^2 \). After passing through the first layer (i.e., \( \ell = 1 \)), we incorporate the layer index, representing the output vector as \( w_1 = [\ell c; x_1; y_1] \), where \( (x_1, y_1) \) denotes the coordinates of the transformed vector at layer \( \ell = 1 \) within the extended vector space \( \mathbb{R}^{k+1} \), and \( c > 0 \) is a constant scale. As illustrated in Fig.~\ref{fig:groupgigug} panel (a), this formalization allows us to visualize the trajectory of the input vector as it progresses through the layers, showing the coordinate changes at each layer.

Refer to Fig.~\ref{fig:groupgigug} panel (b). Appending \( \ell c \) to \( w_{\ell} \) yields the vector \( [\ell c; x_{\ell}; y_{\ell}] \). In the limit of large \( L \), for an arbitrary constant scale $c$, increasing the layer index $\ell=1,2,\ldots, L$ without optimization (i.e., norm maximization) causes the term \( \ell c \) to dominate the norm $\norm{w_{\ell}}=\norm{[x_{\ell};y_{\ell}]}$. The normalization of this vector in \( \mathbb{R}^{k+1} \) can be approximated as:

\begin{align}\label{eq:convffagsol29ww}
\frac{[\ell c; x_\ell; y_\ell]}{\|[\ell c; x_\ell; y_\ell]\|} \approx \frac{1}{\|\ell c\|} \left[ \ell c; 0; 0 \right] = [1; 0; 0] \in \mathbb{R}^{k+1}.
\end{align}
On the other hand, when the optimization process is applied to maximize \( \| w_{\ell} \| \) using the MaxLikelihood algorithm (Algorithm~\ref{algo:maxlik})—ensures that \( \| w_{\ell} \| \) dominates over \( \ell c \). This optimization drives the angle \( \theta \) to approach zero, where \( \theta \) represents the angle relationship between output vector \( w_L \) and the newly introduced layer index axis. Consequently, \( w_L \) aligns approximately with the coordinate \( \frac{[0; x_L; y_L]}{\|[x_L; y_L]\|} \) within \( x \)-\( y \) plane at the final layer \( \ell = L \).

We present the following theorem to formally explain the underlying reason for the observed convergence behavior of the output model \( H_{\text{Linear}}(\cdot) \), within the extended vector space \( \mathbb{R}^{k+1} \), where the angle \( \theta \) is interpreted as the inferred parameter.

\begin{theorem}\label{th:1}
Let \( w \in \mathbb{R}^{k} \) and \( w' \in \mathbb{R}^{k} \) be two random vectors. As the network depth \( L \) of \( H_{\text{Linear}}(\cdot) \) becomes sufficiently large (i.e., overparameterization for a fixed width $n>k$), there exists an approximate solution to the normalized vectors:

\begin{align}
\frac{[0; w_{L}]}{\|w_{L}\|} = \frac{[0; H_{\textnormal{Linear}}(w)]}{\|H_{\textnormal{Linear}}(w)\|} \in \mathbb{R}^{k+1},
\end{align}
and
\begin{align}
\frac{[0; w'_{L}]}{\|w'_{L}\|} = \frac{[0; H_{\textnormal{Linear}}(w')]}{\|H_{\textnormal{Linear}}(w')\|} \in \mathbb{R}^{k+1},
\end{align}
where the angle between these vectors converges to zero asymptotically, as described by
\begin{align}\label{eq:refhuo9022}
\boxed{\theta = \bigg[\theta_0 = \frac{1}{\pi} \arccos\left( \frac{[0; w_{L}]}{\|w_{L}\|} \cdot \frac{[0; w'_{L}]}{\|w'_{L}\|} \right) \bigg]\rightarrow 0,}
\end{align}
yielding the opposite solution:
\begin{align}\label{eq:r4555huo9g3022}
\boxed{1 - \theta =\bigg[ 1 - \theta_0 = 1 - \frac{1}{\pi} \arccos\left( \frac{[0; w_{L}]}{\|w_{L}\|} \cdot \frac{[0; w'_{L}]}{\|w'_{L}\|} \right) \bigg]\rightarrow 1.}
\end{align}

\end{theorem}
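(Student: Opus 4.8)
The plan is to split the argument into two asymptotics carried out inside the extended space $\mathbb{R}^{k+1}$: first, that the appended layer-index coordinate becomes negligible after normalization, and second, that the feature directions themselves collapse onto a common ray. For the first I would quantify the norm amplification produced by the norm-maximization step of Algorithm~\ref{algo:maxlik}. Writing $\hat w_{\ell-1}=w_{\ell-1}/\|w_{\ell-1}\|$, the rows $v_i^{T}$ of $G^*_\ell$ give $v_i^{T}w_{\ell-1}=\|w_{\ell-1}\|\,(v_i^{T}\hat w_{\ell-1})$ with $v_i^{T}\hat w_{\ell-1}\sim\mathcal N(0,1)$ i.i.d., so that $\|w_\ell\|^{2}=\max_{|S|=k}\sum_{i\in S}(v_i^{T}w_{\ell-1})^{2}\ge\frac{k}{n}\sum_{i=1}^{n}(v_i^{T}w_{\ell-1})^{2}$ (max $\ge$ average over $k$-subsets), which concentrates near $k\|w_{\ell-1}\|^{2}$ and, on the top-$k$ order statistics, is of order $k\log n\cdot\|w_{\ell-1}\|^{2}$ for large $n$. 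Hence $\|w_\ell\|\ge\rho\,\|w_{\ell-1}\|$ for a gain $\rho=\rho(n,k)>1$, so $\|w_L\|\ge\rho^{L}\|w\|$ grows geometrically in $L$ while $\ell c$ grows only linearly, whence
\[
\frac{[Lc;\,w_L]}{\|[Lc;\,w_L]\|}=\frac{[0;\,w_L]}{\|w_L\|}+O\!\left(\frac{Lc}{\rho^{L}\|w\|}\right)\xrightarrow[L\to\infty]{}\frac{[0;\,w_L]}{\|w_L\|},
\]
and likewise for $w'$. Since $\|[0;w_L]\|=\|w_L\|$ this is exactly the \enquote{approximate solution} asserted in the statement; it also mirrors the un-optimized collapse of Eq.~\eqref{eq:convffagsol29ww}, where instead $\ell c$ dominates and every vector normalizes to the index axis $[1;0;\dots;0]$. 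What remains is to show $\langle w_L/\|w_L\|,\,w'_L/\|w'_L\|\rangle\to 1$.

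For the second asymptotic the key observation is that norm-maximization does more than inflate the norm: it forces the selected rows to be nearly colinear with the current iterate. Among $n$ i.i.d.\ Gaussian vectors the maximizer of $|v^{T}\hat w_{\ell-1}|$ lies, with overwhelming probability, within angle $\delta_n\to 0$ of $\pm\hat w_{\ell-1}$, and for large $n$ so does the whole top-$k$ block; writing $s_i=\operatorname{sgn}(v_i^{T}\hat w_{\ell-1})$ one gets $w_\ell\approx\rho\|w_{\ell-1}\|\,(s_1,\dots,s_k)$, so $\hat w_\ell$ is, up to $O(\delta_n)$, the fixed lattice direction $s/\sqrt{k}$ and its dependence on $\hat w_{\ell-1}$ has been compressed to that discrete pattern. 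I would then couple the trajectories of $w$ and $w'$ to the \emph{same} realized matrices $G^*_1,\dots,G^*_L$ and track the scalar potential $\Phi_\ell:=1-\langle\hat w_\ell,\hat w'_\ell\rangle$, aiming at a recursion $\Phi_\ell\le\kappa\,\Phi_{\ell-1}+\varepsilon_n$ with $\kappa<1$ and $\varepsilon_n\to 0$ as $n\to\infty$ (once the two iterates share a sign pattern they stay coupled and the rounding error only tightens). This would give $\Phi_L\to 0$, so the collapsed configuration is the extremal pair $\cos\angle(w_L,w'_L)\to +1$, hence $\theta_0=\tfrac1\pi\arccos(\cdot)\to 0$; the boxed companion $1-\theta_0\to 1$ is then immediate from continuity of $\arccos$, and it is the correlation-branch counterpart of the anti-correlation branch of Observation~\ref{claim:1}, the two being interchangeable through $H_2(\theta_0)=H_2(1-\theta_0)$ as used in Eq.~\eqref{eq:23antico4}--\eqref{eq:23antico5}.

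I expect the genuine difficulty to lie entirely in the second part: making precise that the input-dependence of the output direction \emph{decays}, rather than merely staying bounded, as the network deepens. This requires (i) a union bound over the $L$ layers for the Gaussian extreme-order-statistic events so that all row selections are simultaneously colinear with high probability; (ii) a propagation lemma for how the transverse error of a single row-selection is transformed by the next linear layer; (iii) the trajectory coupling to shared weights; and (iv) ruling out the obstruction that two distinct sign patterns could be stable under the layer map. Point (iv) is exactly why the hypothesis must be read as the overparameterized regime with $n$ also large — so that $\rho$, $\kappa$ and $\varepsilon_n$ are all favorable — and why \enquote{sufficiently large $L$} alone does not suffice; I would accordingly phrase the final estimate as an iterated limit $\lim_{L\to\infty}\lim_{n\to\infty}$ (or jointly with $n=n(L)\to\infty$), and let the extended-space reduction of the first part upgrade $\Phi_L\to 0$ to the two boxed displays.
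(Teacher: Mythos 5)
Your first asymptotic --- geometric growth of \(\|w_\ell\|\) under norm maximization versus linear growth of the appended coordinate \(\ell c\), hence \([Lc;w_L]/\|[Lc;w_L]\|\to[0;w_L]/\|w_L\|\) --- is essentially the whole of the paper's own proof, and your version is more quantitative: the paper argues only qualitatively that \(\ell c/\sqrt{(\ell c)^2+\|w_\ell\|^2}\) becomes negligible as \(\|w_\ell\|\) grows (Eq.~\eqref{eq:convgsol29ww}) and then concludes. Where you and the paper part ways is in what happens next. The paper passes from ``each normalized vector has vanishing elevation angle above the \(\mathbb{R}^k\) hyperplane'' directly to \(\theta_0=\frac{1}{\pi}\arccos(\hat w_L\cdot\hat w'_L)\to 0\) by identifying \(\theta\) with \(\theta_0\) via Propositions~\ref{propo:1} and~\ref{eq:proposition13}. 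These are two different angles: one is the angle each trajectory makes with the layer-index axis, the other is the pairwise angle between the two trajectories inside \(\mathbb{R}^k\), and the first tending to zero places no constraint on the second. Your decomposition makes this distinction explicit, and your second asymptotic --- directional collapse \(\langle\hat w_L,\hat w'_L\rangle\to\pm1\) via near-colinearity of the selected rows with the current iterate and a contraction of \(\Phi_\ell=1-\langle\hat w_\ell,\hat w'_\ell\rangle\) --- is precisely the ingredient the theorem actually needs.

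The gap is that you do not prove that second part either: the recursion \(\Phi_\ell\le\kappa\,\Phi_{\ell-1}+\varepsilon_n\) is announced as a target, and items (i)--(iv) of your closing paragraph are a programme rather than an argument. Two points need care if you carry it out. First, near-colinearity of the top-\(k\) rows with \(\pm\hat w_{\ell-1}\) requires the extreme projection \(\sqrt{2\log n}\) to dominate the transverse Gaussian component of norm \(\approx\sqrt{k-1}\), i.e.\ \(\log n\gg k\); this is far stronger than the stated hypothesis \(n>k\), so your instinct to phrase the conclusion as an iterated limit in \(n\) and \(L\) is correct and should be promoted to an explicit strengthening of the hypothesis. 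Second, because the dropout pattern is fixed by the training trajectory of \(w\) and then reused for \(w'\), the coupling you describe is forced by the algorithm rather than chosen, which actually helps: \(w'_\ell\) is the projection of \(w'_{\ell-1}\) onto rows that are all nearly parallel to \(\pm\hat w_{\ell-1}\), so the layer map is approximately rank one and sends every input near the fixed sign direction \(s/\sqrt{k}\); that is where \(\kappa<1\) must come from. Until that lemma is written down, your proposal --- like the paper's proof --- establishes only the reduction to \(\mathbb{R}^k\), not the convergence \(\theta_0\to 0\) itself.
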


\begin{proof}
Consider the output vector \( w_{\ell} = H_{\text{Linear}}(w) = G_{\ell} G_{\ell-1} \dots G_{1}(w) \) at a particular layer \( \ell \leq L \), represented with the layer index appended as \( w_{\ell} =[\ell c; w_{\ell}]= [\ell c; x_\ell; y_\ell] \in \mathbb{R}^{k+1} \). By normalizing these vectors, we obtain the following expression for the normalized output at layer \( \ell \):

\begin{align}
&\frac{[\ell c; x_\ell; y_\ell]}{\|[\ell c; x_\ell; y_\ell]\|} \nonumber \\
&= \left[\underbrace{\frac{\ell c}{\sqrt{(\ell c)^2 + \|w_\ell\|^2}}}_{\textnormal{Layer information}}; \underbrace{\frac{x_\ell}{\sqrt{(\ell c)^2 + \|w_\ell\|^2}}; \frac{y_\ell}{\sqrt{(\ell c)^2 + \|w_\ell\|^2}}}_{\textnormal{Spatial information}}\right].
\end{align}
As the depth \( L \) of the network increase, the norm of the weight vector \( \|w_\ell\| \) grows as \( \ell \to L \). In the limit of large \( L \), the contribution of the normalized layer index term \( \frac{\ell c}{\sqrt{(\ell c)^2+\norm{w_{\ell}}^2}} \) becomes negligible, and the normalized vector approaches the following form:

\begin{align}\label{eq:convgsol29ww}
\frac{[\ell c; x_\ell; y_\ell]}{\|[\ell c; x_\ell; y_\ell]\|} \approx \frac{1}{\|w_\ell\|} \left[ 0; x_\ell; y_\ell \right] \in \mathbb{R}^{k+1}.
\end{align}
This approximation holds for any input vector, even for random unseen test vector \( w' \in \mathbb{R}^k \).  As \( \ell \rightarrow L \), in the limit of large \( L \), the angle \( \theta \) of the normalized vectors $\frac{[\ell c;w_{L}]}{\|[\ell c;w_{L}]\|} \quad \text{and} \quad \frac{[\ell c;w'_{L}]}{\|[\ell c;w'_{L}]\|},$
measured relative to the \( x \)-\( y \) plane, converges to zero, i.e., \( \theta \to 0 \).

With the constraints specified in Proposition~\ref{propo:1} (i.e., \( \theta = \theta_0 \)) and Proposition~\ref{eq:proposition13} (i.e., \( \theta_0 = \frac{1}{\pi} \arccos\left( \frac{w_{L}}{\|w_{L}\|} \cdot \frac{w'_{L}}{\|w'_{L}\|} \right) \)), we yield Eq.~\eqref{eq:refhuo9022} since $  {(\frac{w'_L}{\|w'_L\|} \cdot \frac{w_L}{\|w_L\|}) }={( \frac{[0; w'_L]}{\|w'_L\|} \cdot \frac{[0; w_L]}{\|w_L\|} )}\rightarrow 0$.

Normalize \( \theta \in (0, \frac{1}{2}) \) by dividing it by \( \pi \), resulting in a convergence behavior such that \( \theta \to 0 \) holds for all \( \theta = \theta_0 < \frac{1}{2} \). Conversely, for \( (1 - \theta) \in (\frac{1}{2}, 1) \), the opposite behavior holds: \( (1-\theta) = (1-\theta_0) > \frac{1}{2} \), where \( (1 - \theta) \to 1 \) as \( \theta \to 0 \).

This symmetric convergence behavior of the angle parameter \( \theta \to 0 \) (or \((1- \theta) \to 1 \)) plays a critical role in the collapsing phenomenon observed within the overparameterized regime, consistent with Observation~\ref{claim:1}.

\end{proof}

\textbf{Remark 1:} The proof of Theorem~\ref{th:1} extends the neural network's output feature space to \( \mathbb{R}^{k+1} \) by incorporating the layer index. This extension enables the layer index to influence the trajectory of the feature vectors as they propagate through the network, particularly after the vectors are normalized to unit norm. The approach is conceptually inspired by Einstein’s unification of space and time into spacetime, which explains intriguing phenomena such as time dilation and length contraction. In our framework, the layer index serves as a temporal-like dimension, shaping the evolution of feature vectors as a trajectory in the extended space. The angle parameter $\theta$ acts as a constraint governing this evolution, reflecting the interdependence between the layer index and the spatial coordinates of the feature vectors after normalization. This formulation facilitates convergence in overparameterized networks through maximum likelihood optimization. Additionally, although our analysis focuses on 2D input vectors for geometric visualization in the extended 3D space, the approach generalizes to \( k > 2 \). In this case, the input vector \( w_{\ell} \) (or \( w'_{\ell} \)) is viewed as a manifold embedded in \( \mathbb{R}^{k+1} \). As the network's width \( n \) and depth \( L \) increase, normalization reduces the layer index influence, causing the feature vectors in the final layer to align along the \( \mathbb{R}^k \) manifold, approximately, as reflected in pairwise distance measures through inner product.

\begin{figure*}[!htp]
\centering
  \includegraphics[scale=0.73]{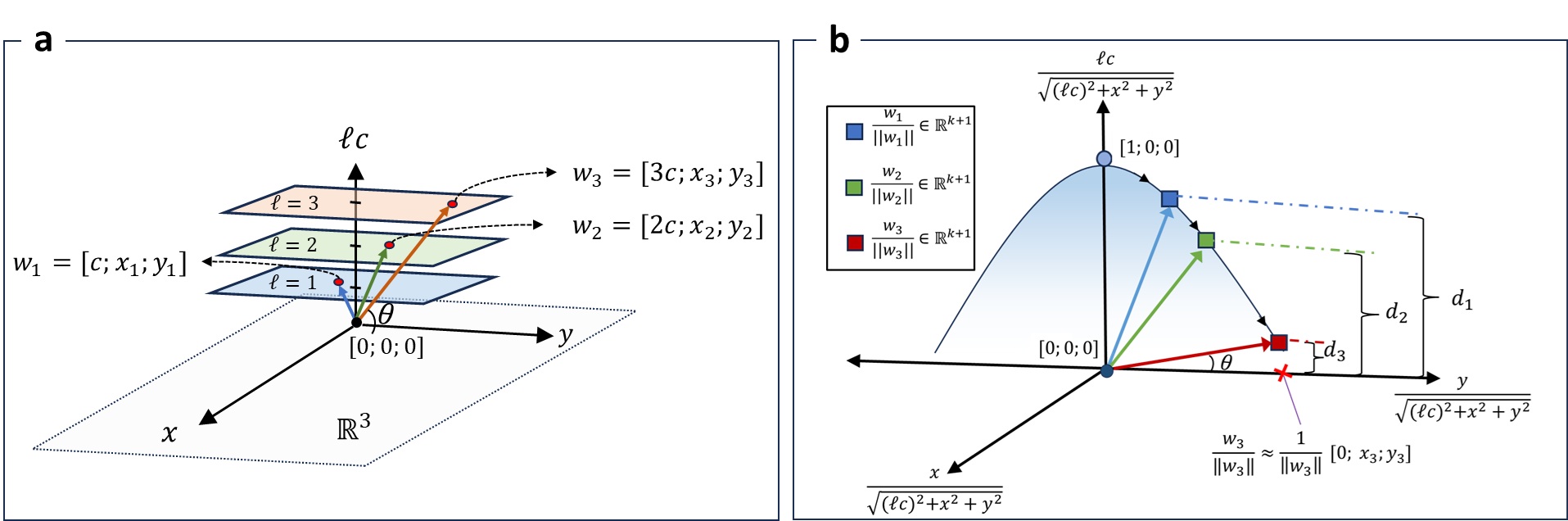}
  \caption{
(a) Coordinates representing the output vector with length \( k = 2 \) (2D) in a feedforward neural network with \( L = 3 \), depicted within an extended vector space \(\mathbb{R}^{k+1}\) (3D), where an additional dimension is included to represent the layer index. (b) Illustration of the convergence of the angle \( \theta \rightarrow 0 \) in the (normalized) extended vector space. This behavior is driven by the term \( d_{\ell} = \frac{\ell c}{\sqrt{(\ell c)^2 + x_\ell^2 + y_\ell^2}} \to 0 \) as a result of maximizing $\norm{w_{\ell}}=\norm{[x_{\ell};y_{\ell}]}$ and normalization, which is crucial for deriving the wormhole solution as established in Theorem~\ref{thr:wormhole}.
}
  \label{fig:groupgigug}
\end{figure*}

\section{Wormhole Enabled Shortcut Learning}\label{sec:weomhtot}

\begin{figure*}[!htp]
\centering
  \includegraphics[scale=0.35]{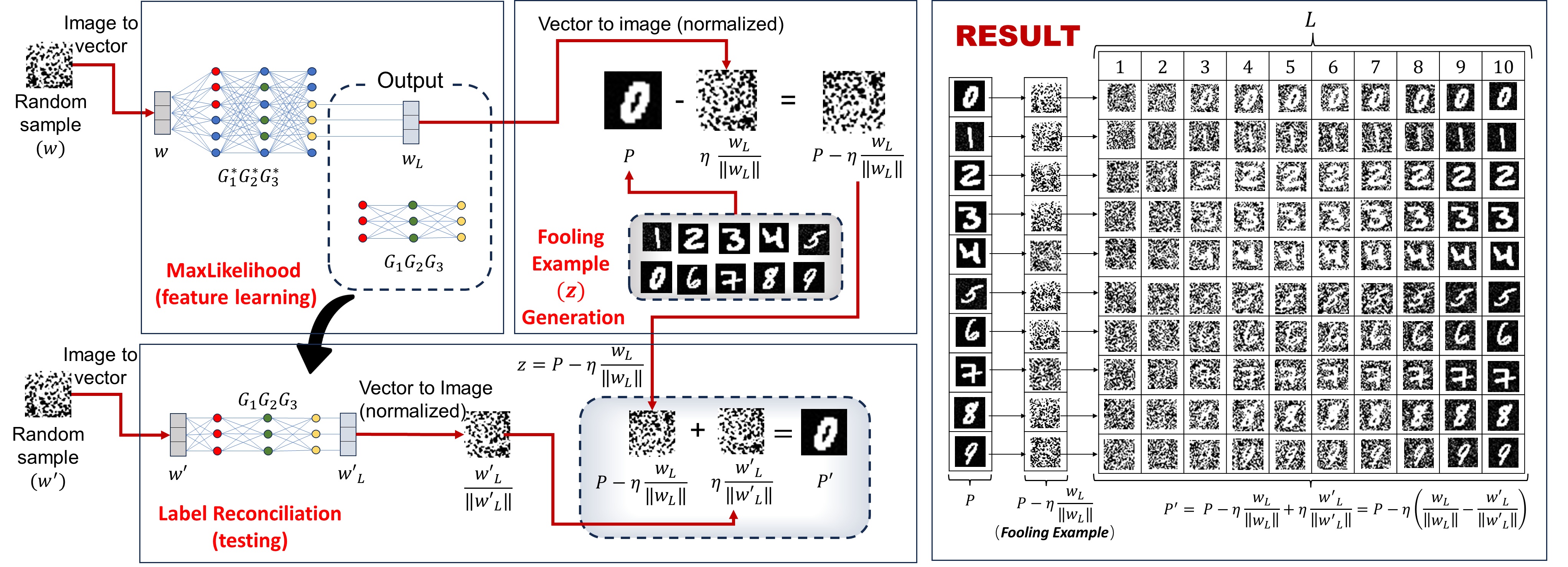}
  \caption{Label reconciliation workflow and results demonstrating the existence of a wormhole even with large perturbations applied to the input image $P$, where the perturbation scale $\eta=10000$. All input samples $w$ and $w'$ are randomly sampled from a Gaussian distribution with mean 0 and variance 1.}
  \label{fig:wormholeworkflow}
\end{figure*}
In the context of feature learning, when the angle between output vectors \( w_L \) and \( w'_L \), corresponding to distinct inputs \( w \) and \( w' \), approaches zero, the network enters a state that we term degeneracy, meaning it can no longer differentiate between different inputs. In this degenerate state, modifications to the network parameters \( \theta \) no longer influence the output, and the loss function or gradient evaluates to zero—even for inputs composed of random noise \cite{zhang2021understanding}. This behavior underscores the network's tendency to memorize inputs, a phenomenon commonly observed in overparameterized models. Consequently, the network may end up learning trivial solutions that do not capture the true complexity of the data, thereby leading to overfitting and poor generalization to unseen examples.

We investigate this phenomenon and demonstrate that, within a feedforward linear neural network, shortcuts akin to "wormholes" can form. These shortcuts bridge distant inputs in the extended vector space \( \mathbb{R}^{k+1} \) and can bypass the degeneracy, preserving the model's ability to generalize even when exposed to apparent noisy inputs, such as fooling examples.

In fact, the notion of wormholes in DNNs has been explored previously. For instance, Gaziv et al.~\cite{gaziv2023robustified} present compelling evidence of low-norm image perturbations disrupting human categorization behavior, particularly in natural images. Their findings indicate that such perturbations create wormholes between perceptual categories, facilitating transitions between semantically distant regions in image space.

While these wormholes have been characterized for low-norm perturbations, questions remain regarding their underlying formation mechanisms and persistence under large-norm perturbations. Large-norm perturbations, such as those observed in fooling examples, may resemble random noise to human perception.

Here, we extend the analysis by formalizing Theorem~\ref{thr:wormhole}, which demonstrates that, under an overparameterized regime, a feedforward neural network \( H_{\textnormal{Linear}}(\cdot) \) can effectively create a shortcut, referred to as a "wormhole," between an arbitrary pair of distant sample points \( P = -\eta \frac{[0; w_{L}]}{\|w_{L}\|} \in \mathbb{R}^{k+1} \) and \( P' = \eta \frac{[0; w'_{L}]}{\|w'_{L}\|} \in \mathbb{R}^{k+1} \). This holds even for an arbitrarily large scale value \( \eta > 0 \), which generates a substantial distance between \( P \) and \( P' \).

\begin{theorem}\label{thr:wormhole}
For arbitrary vectors \( w, w' \in \mathbb{R}^{k} \) and a large scale \( \eta > 0 \), let \( P = -\eta \frac{[0; w_L]}{\|w_L\|} \in \mathbb{R}^{k+1} \) be a point located within the extended vector space \( \mathbb{R}^{k+1} \), where \( w_L = H_{\textnormal{Linear}}(w) \) is the output of a feedforward linear network \( H_{\textnormal{Linear}}(\cdot) \) optimized by Algorithm~\ref{algo:maxlik}. Then, there exists another point \( P' = \eta \frac{[0; w'_L]}{\|w'_L\|} \in \mathbb{R}^{k+1} \), with \( w'_L = H_{\textnormal{Linear}}(w') \), such that in the limit of large enough network depth $L$ (i.e., overparameterized regime with the width $n>k$ fixed):
\begin{align}
\|P - P'\| \to 0,
\end{align}
forming an effective "wormhole" between \( P \) and \( P' \).
\end{theorem}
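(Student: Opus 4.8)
The plan is to first reduce the claim to a single inner‑product limit and then read that limit off the collapse results already established. Since the first coordinate of both $P$ and $P'$ vanishes while $\frac{[0;w_L]}{\|w_L\|}$ and $\frac{[0;w'_L]}{\|w'_L\|}$ are unit vectors in $\mathbb{R}^{k+1}$, both points lie on the radius‑$\eta$ sphere inside the slice $\{0\}\times\mathbb{R}^k$. Writing $u=\frac{[0;w_L]}{\|w_L\|}$ and $u'=\frac{[0;w'_L]}{\|w'_L\|}$, so that $P=-\eta u$ and $P'=\eta u'$, I would expand
\[
\|P-P'\|^2=\eta^2\,\|u+u'\|^2=2\eta^2\Bigl(1+u\cdot u'\Bigr)=2\eta^2\Bigl(1+\tfrac{w_L}{\|w_L\|}\cdot\tfrac{w'_L}{\|w'_L\|}\Bigr),
\]
so that, for any fixed $\eta>0$, it suffices to show that the normalized output directions become perfectly anti‑correlated, i.e. $\frac{w_L}{\|w_L\|}\cdot\frac{w'_L}{\|w'_L\|}\to-1$ as the depth $L\to\infty$ with the width $n>k$ fixed.

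For that step I would read the anti‑correlation off the norm maximization built into Algorithm~\ref{algo:maxlik}. By Proposition~\ref{eq:proposition13} — in particular the worst‑case Hamming‑distance‑$k$ configuration of Eq.~\eqref{eq:fei069702r} — iteratively maximizing $\|G_\ell w_{\ell-1}\|$ drives the network into the state $\frac{\max(w_\ell)}{\|\max(w_\ell)\|}=-\frac{w'_\ell}{\|w'_\ell\|}$ at every layer $\ell\ge1$, which propagates to $\ell=L$ and yields $\frac{w_L}{\|w_L\|}=-\frac{w'_L}{\|w'_L\|}$ in the limit; equivalently, this is the perfect anti‑correlation branch ($1-\theta_0\to0$) of the symmetric collapse in Observation~\ref{claim:1}, Eq.~\eqref{eq:23antico4}. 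To be sure the appended layer index does not spoil the limiting directions appearing in $P$ and $P'$, I would invoke Theorem~\ref{th:1}: after norm maximization the normalized layer‑index component $\frac{\ell c}{\sqrt{(\ell c)^2+\|w_\ell\|^2}}$ vanishes as $L\to\infty$, so the extended normalized vectors converge to $u$ and $u'$ and the anti‑correlation carries over to the $\{0\}\times\mathbb{R}^k$ slice.

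Finally I would substitute $u\cdot u'\to-1$ into the displayed identity, obtaining $\|P-P'\|^2\to2\eta^2(1-1)=0$, hence $\|P-P'\|\to0$. The point worth emphasizing is that the sign asymmetry in the definitions of $P$ and $P'$ is exactly what converts the anti‑alignment $u'=-u$ into the coincidence $P'=\eta u'=-\eta u=P$: two points whose nominal separation can be made as large as $2\eta$ for an arbitrarily large (but fixed) $\eta$ are identified in the limit — the ``wormhole.''

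I expect the main obstacle to be the second step, namely arguing that the greedy norm maximization of Algorithm~\ref{algo:maxlik} genuinely realizes the anti‑correlated branch — i.e. that $w'$ in the statement may be taken so that $\frac{w'_L}{\|w'_L\|}\to-\frac{w_L}{\|w_L\|}$ — rather than the merely correlated (ordinary‑degeneracy) branch; this rests entirely on the worst‑case selection argument of Proposition~\ref{eq:proposition13}. A more quantitative version would also require a convergence rate for $\frac{w_L}{\|w_L\|}\cdot\frac{w'_L}{\|w'_L\|}+1\to0$ if one wished to let $\eta$ grow with $L$, but for each fixed $\eta$ the qualitative limit supplied by Theorem~\ref{th:1} and Observation~\ref{claim:1} suffices, since the prefactor $2\eta^2$ multiplies a vanishing quantity.
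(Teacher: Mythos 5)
Your proposal is correct and follows essentially the same route as the paper's own proof: both reduce $\|P-P'\|$ to $\eta\,\|u+u'\|$ (you via the equivalent inner-product expansion $2\eta^2(1+u\cdot u')$) and then invoke the perfect anti-correlation limit from Theorem~\ref{th:1} and Observation~\ref{claim:1}, Eq.~\eqref{eq:23antico4}. The caveat you flag — that the argument hinges on the anti-correlated branch actually being realized by the norm-maximization of Algorithm~\ref{algo:maxlik} — is likewise the load-bearing assumption in the paper's version.
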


\begin{proof}
For large enough $L$, utilizing the model \( H_{\textnormal{Linear}}(\cdot) \) optimized by Algorithm~\ref{algo:maxlik}, one obtains an approximation of the unit vector represented as:
\begin{align}
\frac{[0; w_L]}{\|w_L\|} = \frac{[0; H_{\textnormal{Linear}}(w)]}{\|H_{\textnormal{Linear}}(w)\|} \in \mathbb{R}^{k+1},
\end{align}
where \( w \in \mathbb{R}^k \) is mapped into \( \mathbb{R}^{k+1} \). This allows us to represent any arbitrary point \( P \) scaled along the unit vector as:
\begin{align}\label{wom234}
P = -\eta \frac{[0; w_L]}{\|w_L\|} \implies P + \eta \frac{[0; w_L]}{\|w_L\|} = 0.
\end{align}
Similarly, for \( P' \), which corresponds to another unit vector \( \frac{[0; w'_L]}{\|w'_L\|} \), we have:
\begin{align}\label{wom23}
P' = \eta \frac{[0; w'_L]}{\|w'_L\|} \implies P' - \eta \frac{[0; w'_L]}{\|w'_L\|} = 0.
\end{align}
From Eqs.~\eqref{wom234} and \eqref{wom23}, we obtain:
\begin{align}\label{eq:wo231m23}
P' - P = \eta \left( \frac{[0; w'_L]}{\|w'_L\|} + \frac{[0; w_L]}{\|w_L\|} \right).
\end{align}
This implies that \( \|P' - P\| \to 0 \) if $\left\| \frac{[0; w'_L]}{\|w'_L\|} + \frac{[0; w_L]}{\|w_L\|} \right\| \to 0$, 
which is achievable in the limit of sufficiently large \( L \), with an arbitrary scale \( \eta > 0 \), in the overparameterized regime. In this regime, we have:
\begin{align} \label{eq:converji93}
\norm{\frac{w'_L}{\|w'_L\|} + \frac{w_L}{\|w_L\|}} = \left\| \frac{[0; w'_L]}{\|w'_L\|} + \frac{[0; w_L]}{\|w_L\|} \right\| \to 0,
\end{align}
enabling a state of perfect anti-correlation between \( \frac{w'_L}{\|w'_L\|} \) and \( \frac{w_L}{\|w_L\|} \) within a lower-dimension vector space $\RR^{k}$, as described in Theorem~\ref{th:1} and Eq.~\eqref{eq:23antico4}.
\end{proof}


\subsection{Label Reconciliation for Fooling Example via Wormhole}
With the previously established existence of a wormhole for an arbitrarily large scaling factor $\eta$, we now aim to validate the possibility of associating a meaningful label with a fooling example. Our main intuition is to generate a fooling example and examine whether any solution within it corresponds to a meaningful label from a human perception perspective. To formalize this, we introduce the concept of label reconciliation:

\begin{definition}\label{def:12fg}
Label reconciliation is the process of recovering a meaningful label $P$, consistent with human perception, given a randomly generated label $P'$ and a fooling example $z$. The goal is to find a mapping $M$ such that $M(P', z) = P$.
\end{definition}

\textbf{Label Reconciliation Setup:} In this work, we do not generate fooling examples as pure random noise as verifying the existence of a solution within pure random noise would be computationally intractable. Instead, we conduct the label reconciliation experiment using Algorithm \ref{algo:maxlik}, which comprises three main processes described as follows:

\begin{enumerate}
    \item \textit{Feature Learning:} This step involves optimizing a model using an arbitrary random sample $w \in \mathbb{R}^{k}$. The model generates an output vector $\eta \frac{w_{L}}{\norm{w_{L}}}$ used to introduce input perturbation (noise), where $\eta > 0$ is a scaling factor.
    \item \textit{Fooling Example Generation:} This step generates fooling examples $z = P - \eta \frac{w_{L}}{\norm{w_{L}}}$ by injecting noise, represented by the generated output vector \(\eta \frac{w_{L}}{\norm{w_{L}}}\), into a meaningful label $P$. A large perturbation $\eta$ introduces significant distortion into the meaningful label $P$, making it appear as random noise from a human perception standpoint.
    \item \textit{Label Reconciliation:} This step aims to recover the meaningful label $P$ from the random label, defined as $P'=z+\eta\frac{w'_{L}}{\norm{w'_{L}}}$, where $\eta\frac{w'_{L}}{\norm{w'_{L}}}$ is represents another random output vector.
\end{enumerate}

\textbf{Results and Discussion:} Fig. \ref{fig:wormholeworkflow} illustrates the experimental workflow for label reconciliation using the open-source MNIST dataset \cite{deng2012mnist}. Each image is converted into a single vector of dimension \( 28 \times 28 = 784 \) as the meaningful label $P$ (represented as "digits") for fooling example generation. The parameter \(\eta\) is set to \(10000\) throughout the experiment.

On the right-hand side of Fig. \ref{fig:wormholeworkflow}, the results show the successful reconciliation of the original label $P$ using $P'$ and the fooling example $z$ as the network depth $L$ increases. This result can be explained by the establishment of the wormhole as stated in Theorem \ref{thr:wormhole} (when $z\rightarrow 0)$. Specifically, a wormhole solution exists for $\|P - P'\| \to 0,$ even for an arbitrarily large $\eta$ leading to a state of perfect anti-correlation as described in Eq. \eqref{eq:converji93}. Therefore, label reconciliation can be achieved through such a wormhole, where:
\begin{align}\label{eq:dsjfiofg89}
P = M(P', z) = P' - \eta \underbrace{\left( \frac{w'_L}{\|w'_L\|} - \frac{w_L}{\|w_L\|} \right)}_{\rightarrow  0},
\end{align}
implies a solution of perfect correlation where $\frac{w'_L}{\|w'_L\|} = \frac{w_L}{\|w_L\|}$, consistent with Observation \ref{claim:1}.

It is noteworthy that the convergence of the distances between normalized output vectors to arbitrarily small values, as delineated in Eq. \eqref{eq:dsjfiofg89}, does not necessarily lead to a state of degeneracy or overfitting that would compromise the model's generalization capacity. Rather, this convergence can be interpreted as indicative of the model's inherent simplicity bias, whereby the network effectively employs a "wormhole" shortcut to approximate a solution.  This mechanism enables the network to reconcile the meaningful label \( P \) with a random  label \( P' \), ultimately, facilitates the derivation of
$P$ using $P'$.

To better relate these findings to the broader success of gradient-based optimization, consider the loss function defined as: \( \mathcal{L}(\theta, y, w') = \left\| \frac{w'_L}{\|w'_L\|} - y \right\| \), where we describe $y=\frac{w_L}{\|w_L\|}$ as a vector associated with the label $P$ that is meaningful to humans (e.g., a label such as digit "zero" or "one"). Given the fooling example and the existence of a solution as expressed in Eq.~\eqref{eq:dsjfiofg89}, this formulation enables gradient-based technique (i.e., gradient descent) to transform a random label \( P' \) into a meaningful label \( P \) follows
\begin{align}\label{eq:dsefffee9}
P \leftarrow P' - \eta \underbrace{\nabla_{\theta} \mathcal{L}(\theta, y, w')}_{\rightarrow 0},
\end{align}
where a stationary point, such that \( \nabla_{\theta} \mathcal{L}(\theta, y, w') = 0 \), would yield a deterministic solution for \( P = P' \). 

It is also worth noting that \( w' \) can be chosen as any sample meaningful to humans, such as an image of a "Panda." In this case, the corresponding output vector \( \frac{w'_{L}}{\|w'_{L}\|} \) becomes associated with the "Panda" label for \( P' \).

By taking the sign of the computed gradient, Eq. \eqref{eq:dsefffee9} can be transformed into 
\begin{align}
P' \leftarrow P + \eta \cdot \mathsf{sgn}({\nabla_{\theta} \mathcal{L}(\theta, y, w'))},
\end{align}
which represents the Fast-Gradient Sign Method (FGSM) proposed by Goodfellow \cite{goodfellow2014explaining} for adversarial example generation ("Panda" $\leftarrow$ "digits") when $\eta$ is small, i.e. $P\approx P'$. In light of this, above derivation also provides a compelling explanation for the existence of adversarial examples, attributed to the wormhole solution.

\textbf{Remark 2}: Although our work primarily focuses on unsupervised learning, our findings on label reconciliation have broader implications for supervised learning, where meaningful labels are typically considered essential for model training. Specifically, we demonstrate the existence of a mapping \( M(z, P') = P \),  facilitated by the newly derived wormhole solution (see Eq.~(\ref{eq:dsjfiofg89})), which enables the output model \( H_{\textnormal{Linear}}(.) \) to transform an arbitrary label \( P' \)—which may appear random to human perception—into a meaningful label \( P \).  Without confirming this mapping exists, there is no guarantee that a model can reliably transform \( P' \) into \( P \) using gradient-based methods (see Eq.~(\ref{eq:dsefffee9})). In other words, if label reconciliation is infeasible, a fooling example may remain adversarial rather than aligning with a meaningful label, making it indistinguishable from structured noise. This, in turn, raises fundamental questions about the role of label reconciliation in supervised learning, particularly regarding the conditions under which meaningful label alignment can be guaranteed.

\section{Conclusion}
This study explores the fundamental mechanisms that enable deep neural networks (DNNs) to generalize from unstructured inputs. Through analytical derivations and convergence analyses, we demonstrate that feedforward neural networks can achieve maximum likelihood estimation (MLE) in unsupervised settings, without reliance on labeled data and numerical optimization techniques such as stochastic gradient descent (SGD). Our results reveal that DNNs generalize to arbitrary input data by maximizing input norms under normalization constraints. Furthermore, we demonstrate that overparameterized networks exhibit feature space collapse, a phenomenon that intrinsically enables effective data clustering via memorization.

A key insight of this work is the identification of the "wormhole" solution, which bypasses feature collapse under extreme conditions, offering a novel perspective on how DNNs map noisy inputs to coherent outputs. This result also provides a theoretical basis for understanding shortcut learning and how DNNs generalize to fooling examples.

While our analyses are grounded in feedforward linear networks—key components of more complex DNN architectures—we recognize that extending these findings to models such as convolutional networks or transformer-based architectures poses challenges that warrant further investigation. For example, our flipped label classification experiments (see Figure \ref{fig:frame2}) illustrate how processing an input image \( w \) through \( H_{\textnormal{Linear}}(\cdot) \) to extract the output feature vector \( w_L \) can shed light on these dynamics. Future research might integrate additional mechanisms (e.g., graph-based methods, Fourier transformations, or attention modules) prior to feature embedding within \( H_{\textnormal{Linear}}(\cdot) \).  Such extensions could deepen our understanding of the collapse phenomenon across diverse feature extraction strategies and possibly inform the design of more interpretable, efficient DNN architectures.

Despite these limitations, our findings resonate with insights from cognitive science, where human learners demonstrate remarkable generalization from limited data—a capability still elusive for many existing AI systems. We propose that a deeper exploration of the intrinsic learning dynamics in unsupervised and contrastive learning settings may pave the way for developing more efficient, human-like learning systems. Shifting the focus from merely scaling model parameters and data toward understanding learning dynamics holds significant promise for advancing research in artificial general intelligence (AGI).

\bibliographystyle{IEEEtran}
\bibliography{tnnlsbib}
\end{document}